\def\grad{\nabla}
\def\expectation{\mathbb{E}}
\def\prob{P}
\def\IND{\mathds{1}}
\newcommand{\ent}{\mathbb{H}}
\newtheorem{theorem}{Theorem}
\newtheorem{corollary}{Corollary}[theorem]
\title{Entropy Regularization with Discounted Future State Distribution in Policy Gradient Methods}
\author{%
Riashat Islam\\
McGill University, Mila\\
School of Computer Science\\
riashat.islam@mail.mcgill.ca\\
\And
Raihan Seraj\\
McGill Univeristy\\
raihan.seraj@mail.mcgill.ca\\
\AND
Pierre-Luc Bacon\\
Stanford Univeristy\\
plbacon@cs.stanford.edu\\
\And
Doina Precup\\
McGill University, Mila\\
School of Computer Science\\
dprecup@cs.mcgill.ca\\
}
\begin{document}

\maketitle

\begin{abstract}

The policy gradient theorem is defined based on an objective with respect to the initial distribution over states. In the discounted case, this results in policies that are optimal for one distribution over initial states, but may not be uniformly optimal for others, no matter where the agent starts from. Furthermore, to obtain unbiased gradient estimates, the starting point of the policy gradient estimator requires sampling states from a normalized discounted weighting of states. However, the difficulty of estimating the normalized discounted weighting of states, or the stationary state distribution, is quite well-known. Additionally, the large sample complexity of policy gradient methods is often attributed to insufficient exploration, and to remedy this, it is often assumed that the restart distribution provides sufficient exploration in these algorithms. In this work, we propose exploration in policy gradient methods based on maximizing entropy of the discounted future state distribution. The key contribution of our work includes providing a practically feasible algorithm to estimate the normalized discounted weighting of states, i.e, the \textit{discounted future state distribution}. We propose that exploration can be achieved by entropy regularization with the discounted state distribution in policy gradients, where a metric for maximal coverage of the state space can be based on the entropy of the induced state distribution. The proposed approach can be considered as a three time-scale algorithm and under some mild technical conditions, we prove its convergence to a locally optimal policy. Experimentally, we demonstrate usefulness of regularization with the discounted future state distribution in terms of increased state space coverage and faster learning on a range of complex tasks.

\end{abstract}

\section{Introduction}

Exploration in policy optimization methods is often tied to exploring in the policy parameter space. This is primarily achieved by adding noise to the gradient when following stochastic gradient ascent. More explicit forms of exploration within the state and action space include policy entropy regularization. This promotes stochasticity in policies, thereby preventing  premature convergence to deterministic policies \citep{mhiha2c, eq_pgq}. Such regularization schemes play the role of smoothing out the optimization landscape in non-convex policy optimization problems \citep{understanding_entropy}. Deep reinforcement learning algorithms have had enormous success with entropy regularized policies, commonly known as maximum entropy RL framework \citep{Ziebart}. These approaches ensure exploration in the action space, which indirectly contributes to exploration in the state space, but do not explicitly address the issue of state space exploration. This leads us to the question : \textit{how do we regularize policies to obtain maximal coverage in the state space?}

One of the metrics to measure coverage in state space is the entropy of the \textit{discounted} future state distribution, as proposed in~\citep{kakade_entropy}. In their work, they prove that using the entropy of discounted future state distribution as a reward function, we can achieve improved coverage of the state space. Drawing inspiration from this idea, and to provide a practically feasible construct, we first propose an approach to estimate the discounted future state distribution. We then provide an approach for efficient exploration in policy gradient methods, to reduce sample complexity, by regularizing policy optimization based on the entropy of the discounted future state distribution. The implication of this is that the policy gradient algorithm yields policies that improve state space coverage by maximizing the entropy of the discounted future state distribution induced by those policies as an auxiliary regularized objective. This distribution takes into account when various states are visited in addition to which states are visited. The main contribution of our work is to provide a practically feasible way to estimate the discounted future state distribution with a density estimator. Furthermore, we show that regularizing policy gradients with the entropy of this distribution can improve exploration. To the best of our knowledge, there are no previous works that provide a practical realization for estimating and regularizing with the entropy of the discounted state distribution.

It is worthwhile to note that the estimation of the discounted/stationary state distribution is not readily achievable in practice. This is because the stationary distribution requires an estimate based on rollouts, as in value function estimates, under a given policy $\pi$. In contrast, the discounted state distribution requires estimation of discounted occupancy measures for the various states. Since the discounted occupancy measure is purely a theoretical construct, it is not possible to sample from this distribution using rollouts. In order to use this as an entropy regularizer, we also need the discounted or stationary distributions to be explicitly dependent on the policy parameters, which is not straightforward in practice. 

To address this, we estimate the state distribution by separately training a density estimator based on sampled states in the rollout. The crucial step here is that, we use a density estimator that is explicitly a function of the policy parameters $\theta$. In other words, our density estimator takes as input, the parameters $\theta$ of the policy itself (for instance, weights of a policy neural network) through which we now obtain an estimate of $p_{\theta}: \theta \mapsto \Delta(S)$, where $p_\theta(s)$ is the occupancy probability (discounted or otherwise) of state $s$. We use a variational inference based density estimator, which can be trained to maximize a variational lower bound to the the log-likelihood of $p_{\theta}(s)$. As a result, we can obtain an estimation of $d_{\pi_{\theta}}$ since in case of stationary distributions, we have $\log p_{\theta}(s) = \log d_{\pi_{\theta}}(s)$. Estimation of $d_{\pi_{\theta}}$ under any policy $\pi_{\theta}$ requires collecting a large number of samples from the rollout. Instead of this, we can use ideas from multi-scale stochastic algorithms to learn this in an online manner. Hence, we require a separate time-scale for training the density estimator, in addition to learning the policy and value functions in policy gradient based approaches. We formally state and prove the corresponding three time-scale algorithm. The key contributions of our work are as follows : 

\begin{itemize}
    \item  We provide a practically feasible algorithm based on neural state density estimation, for entropy regularization with the \textit{discounted} future state distributions in policy gradient based methods. This can be adapted for both episodic and infinite horizon environments, by switching between stationary and discounted state distributions with a $(1-\gamma) \gamma^{t}$ weighted importance sampling correction
    
    \item For regularization with the state distribution, we require the estimated state distribution to be directly inlfuenced by the policy parameters $\theta$. We achieve this by learning the density estimator directly as a function of policy parameters $\theta$, denoted by $d_{\pi_{\theta}}$, such that the policy gradient update can be regularized with $\nabla_{\theta} \ent(d_{\pi_{\theta}})$
    
    \item Our approach requires estimating the state distribution, in addition to \textit{any} existing actor-critic or policy optimization method, for state distribution entropy regularization. This leads to a three time scale algorithm based on our approach. We provide convergence for a \textit{three-time-scale} algorithm and show that under sufficiently mild conditions, this approach can converge to the optimal solution.

    \item We demonstrate the usefulness of entropy regularization with the discounted state distribution on a wide range of deep reinforcement learning tasks, and discuss the usefulness of this approach compared to entropy regularization with stochastic policies, as commonly used in practice.
    
\end{itemize}

\section{Preliminaries}

We consider the standard RL framework, where an agent acts in an environment that can be modelled as a Markov decision process (MDP). Formally we define an MDP as a tuple $(\mathcal{S}, \mathcal{A}, P, r)$, where\footnote{For ease of exposition we assume that all spaces being considered are finite. The approach that we propose extends in a straightforward manner to continuous spaces using standard measurability conditions.}: $\mathcal{S}$ is the state space, $P: \mathcal{S} \times \mathcal{A} \to \Delta(\mathcal{S})$ is the transition probability matrix/kernel of that outputs a distribution over states given a state-action pair, $r: \mathcal{S} \times \mathcal{A} \to \mathbb{R}$ is the reward function that maps state-action pairs to real numbers. In this work, we focus on policy based methods. We use parametrized policies $\pi_\theta$, where the parameters are $\theta \in \Theta$ and $\Theta$ is a compact, convex set. The performance of any such policy is given by $    J(\theta) = \expectation_{A_t \sim \pi_{\theta}, S_{t+1} \sim P(S_t, A_t)} \Bigl[\sum_{t=0}^{\infty} \gamma^{t} r(S_t, A_t)  \Bigm | S_0 \Bigr]$, where $\gamma \in (0,1)$ is the discount factor and $S_0$ is the initial state with a distribution $P_{0}$. The objective of the agent is to find a parameter $\theta^*$ that maximizes this performance i.e., $\theta^* \in \arg\max J(\theta)$. The usual approach to find such a $\theta^*$ is to use a stochastic gradient ascent based iteration: $\theta_{k+1} = \theta_k + \alpha_k \grad_\theta{J(\theta_k)}$, where $\grad_\theta{J(\theta)}$ can be obtained using the policy gradient theorem~\cite{sutton, dpg} or one of its variants. It has been shown in literature that this iteration converges almost surely to a local maximum of $J(\theta)$ under relatively mild technical conditions. 

Maximum entropy based objectives are often used in policy gradient methods to help with exploration by avoiding premature convergence of stochastic policies to deterministic policies. This is often termed as entropy regularization which augments the rewards with an entropy weighted term given by $r_t = r_{t} + \lambda H( \pi(.|s_t )$. One of the common approaches is to use an entropy regularized objective, given by $    \tilde{J}(\theta) = \expectation_{\pi_{\theta}} \Bigl[\sum_{t=0}^{\infty} \gamma^{t} r(S_t, A_t) + \lambda \ent (\pi_{\theta}(\cdot \mid S_t) \Bigm | S_0 \Bigr]$, where $\lambda \in \mathbb{R}$ is a hyperparameter and $\ent (\pi_{\theta}(\cdot \mid S_t)$ is the entropy of the policy function (for stochastic policies).

\section{State Distribution in Policy Gradient Methods}

Policy gradient theorem ~\citep{sutton} for the starting state formulation are given for an initial state distribution $\mathbf \alpha$, where the exact solution for the discounted objective is given by $J{\theta} = \mathbf \alpha^{T} v_{\theta} = \mathbf \alpha^{T} (\mathbf I - \gamma P_{\theta})^{-1} r_{\theta}$. In \citep{sutton}, this is often known as the \textit{discounted weighting of states} defined by $d_{\alpha, \gamma, \pi}^{T} = \mathbf \alpha^{T} (\mathbf I - \gamma P_{\theta})^{-1}$, where in the average reward case this reaches a stationary distribution implying that the process is independent of the initial states. However, the discounted weighting of states is not a distribution, or a stationary distribution in itself, since the rows of the matrix $(\mathbf I - \gamma P_{\theta})^{-1}$ do not sum to 1. The normalized version of this is therefore often considered, commonly known as the \textit{discounted future state distribution} ~\citep{Kakade2003OnTS} or the discounted state distribution ~\citep{Thomas14}. Detailed analysis of the significnace of the state distribution in policy gradient methods is further given in [Bacon, 2018]. 

\begin{equation}
\label{eq:gamma_discounted_future_state}
    \Bar{d}_{\alpha, \gamma, \pi} = (1 - \gamma) d_{\alpha, \gamma, \pi} = ( 1 - \gamma) \mathbf\alpha^{T} (\mathbf I - \gamma \mathbf P_{\pi})^{-1} = ( 1 - \gamma) \mathbf \alpha^{T} \sum_{t=0}^{\infty} \gamma^{t} P_{\pi} (s_t = s)
\end{equation}

Given an infinite horizon MDP, and a stationary policy $\pi(a,s)$, equation \ref{eq:gamma_discounted_future_state} is the $\gamma$ discounted future state distribution, ie, the normalized version for the discounted weighting of states. We can draw samples from this distribution, by simulating $\pi$ and accepting each state as the sample with a probability $(1 - \gamma)$. With the discounted future state distribution, the equivalent policy gradient objective can therefore be given by $J({\theta}) = \Bar{ \mathbf  d}_{\alpha, \gamma, \theta}^{T} \mathbf r_{\theta}$. In practice, we want to express the policy gradient theorem with an expectation that we can estimate by sampling,  but since the discounted weighting of states $d_{\alpha, \gamma, \pi}$ is not a distribution over states, we often use the normalized counterpart of the discounted weighting of states $\Bar{d}_{\alpha, \gamma, \pi}$ and correct the policy gradient with a factor of $\frac{1}{(1 - \gamma)}$. 

\begin{equation}
    \nabla_{\theta} J(\theta) = \frac{1}{(1 - \gamma)} \expectation{_{\Bar{d}_{\alpha, \gamma, \theta}, a \sim \pi_{\theta} } } [ \nabla_{\theta} \log \pi_{\theta}(a,s) Q_{\pi_{\theta}}(s,a) ]
\end{equation}

However, since the policy gradient objective is defined with respect to an initial distribution over states, the resulting policy is not \textit{optimal} over the entire state space, ie, not \textit{uniformly optimal}, but are rather optimal for one distribution over the initial states but may not be optimal for a different starting state distribution. This often leads to the large sample complexity of policy gradient methods \citep{Kakade2003OnTS} where a large number of samples may be required for obtaining good policies. The lack of exploration in policy gradient methods may often lead to large sample complexity to obtain accurate estimates of the gradient direction. It is often assumed that the restart, or starting state distribution in policy gradient method provides sufficient exploration. In this work, we tackle the exploration problem in policy gradient methods by explicitly using the entropy of the discounted future state distribution. We show that even for the starting state formulation of policy gradients, we can construct the normalized discounted future state distribution, where instead of sampling from this distribution (which is hard in practice, since sampling requires discounting with $(1-\gamma)$, we instead regularize policy optimization with the entropy $\ent((1 - \gamma) \mathbf d_{\alpha, \gamma, \theta})$

\section{Entropy Regularization with Discounted Future State Distribution}

The key idea behind our approach is to use regularization with the entropy of the state distribution in policy gradient methods. In policy optimization based methods, the state coverage, or the various times different states are visited can be estimated from the state distribution induced by the policy. This is often called the discounted (future) state distribution, or the normalized discounted weighting of states. In this work, our objective is to promote exloration in policy gradient methods by using the entropy of the discounted future state distribution $d_{\alpha, \gamma, \pi}$ (which we will denote as $d_{\pi_{\theta}}$) where $\mathbf \alpha$ is the distribution over the initial states and to explicitly highlight that this distribution is dependent on the changes in the policy parameters $\theta$, and we propoe a practically feasible algorithm for estimating and regularizing policy gradient methods with the  discounted state distribution for exploration and reducing sample complexity. 


We propose the following state distribution entropy regularized policy gradient objective: ${\tilde{J}(\theta) = \expectation_{\pi_{\theta}} \Bigl[\sum_{t=0}^{\infty} \gamma^{t} r(S_t, A_t) \Bigm | S_0 \Bigr] + \lambda \ent (d_{\pi_{\theta}})}$, where $d_{\pi_{\theta}}$ is the discounted state distribution induced by the policy $\pi$. We can estimate $\nabla_{\theta} {J}(\theta)$ while using stochastic policies from~\citep{sutton} or deterministic policies from~\citep{dpg}. The regularized policy gradient objective is:
\begin{equation}
\label{eq:modified_entobj_gradient}
    \nabla_{\theta} \tilde{J}(\theta) = \nabla_{\theta}{J}(\theta) +\lambda \nabla_{\theta} \ent(d_{\pi_{\theta}})
\end{equation}

\paragraph{Entropy of the \textit{discounted} state distribution $\ent (d_{\alpha, \gamma, \pi})$ : } The discounted state distribution $d_{\alpha, \gamma, \pi_\theta}$ can be computed as:
\begin{equation}\label{eq:def_disc}
    \setlength{\abovedisplayskip}{3pt}
    d_{\alpha, \gamma, \pi_\theta}(s) = (1-\gamma) \mathbf \alpha^{T} \sum_{t=0}^{\infty}\gamma^t\prob(S_t = s), \quad \forall s \in \mathcal{S}
\end{equation}
We note that this is a theoretical construct and we cannot sample from this distribution, since it would require sampling each state with a probability $(1-\gamma)$ such that the accepted state is then distributed according to $d_{\alpha, \gamma, \pi_{\theta}}$. However, we can modify the state distribution $p(s)$ to a weighted distribution $\tilde p(s)$ as follows. We estimate $p(s)$ from samples as:
\[\setlength{\abovedisplayskip}{3pt}
p(s) = \frac{1}{T}\sum_{t=0}^{T}\IND({S_t = s}),
\]
where the weight of each sample is $1/T$. To estimate $\tilde p(s)$, we use an importance sampling weighting of $(1-\gamma)\gamma^t$ to yield:
\begin{equation}\label{eq:tildep}
\setlength{\abovedisplayskip}{3pt}
\tilde p(s) \stackrel{(a)}{=} \frac{(1-\gamma)}{T}\sum_{t=0}^{T}\gamma^t\IND({S_t = s}) \stackrel{(b)}{=} (1-\gamma)\sum_{t=0}^T(\gamma^t\prob(S_t = s \mid S_0)) \stackrel{(c)}{\approx} d_{\gamma, \pi_\theta}(s),
\end{equation}
where $(a)$ follows from the importance sampling approach, $(b)$ follows from the fact that $\frac{\IND({S_t = s})}{T} = \prob(S_t = s \mid S_o)$ and $(c)$ follows from~\eqref{eq:def_disc} and the approximation is due to the finite truncation of the infinite horizon trajectory. Note that due to this finite truncation, our estimate of $d_{\gamma, \pi_\theta}$ will be sub-stochastic. Therefore, we can estimate the entropy of this distribution as:
\begin{equation}\label{eq:disc_ent}
    \ent (d_{\alpha, \gamma, \pi_\theta}) \approx -\frac{1}{T}\sum_{t=0}^T\log \tilde p(S_t).
\end{equation}

\paragraph{Entropy of the \textit{stationary} state distribution $\ent (d_{1, \pi_{\theta}})$ : } For the average reward case with infinite horizon MDPs, we can similiarly compute the entropy of the \textit{stationary} state distribution. The stationary distribution $d_{1, \pi_\theta}$ is a solution of the following fixed point equation satisfying
\begin{equation}
    d_{1, \pi_\theta} = P_{\pi_\theta}^\intercal d_{1, \pi_\theta},
\end{equation}
where $P_{\pi_\theta}$ is the transition probability matrix corresponding to policy $\pi_\theta$. In practice, this is the long term state distribution under policy $\pi_\theta$, which is denoted as $p(s)$. In infinite horizon problems, the stationary state distribution is indicative of the majority of the states visited under the policy. We expect the stationary state distribution to change slowly, as we adapt the policy parameters (especially for a stochastic policy). Hence, we assume that the states have mixed, as we learn the policy over several iterations. In practice, instead of adding a mixing time specifically, we can use different time-scales for learning the policy and estimating the stationary state distribution. The entropy of the stationary state distribution can therefore be computed as :  
\begin{equation}\label{eq:stationary_ent}
\setlength{\abovedisplayskip}{3pt}
\ent (d_{1, \pi_\theta}) \stackrel{(a)}{=} -\sum_{s \in \mathcal{S}}d_{1, \pi_\theta}(s) \log(d_{1, \pi_\theta}(s)) \stackrel{(b)}{\approx} - \frac{1}{T} \sum_{t=0}^{T} \log d_{1, \pi_\theta}(S_t) \stackrel{(c)}{=} - \frac{1}{T} \sum_{t=0}^{T} \log p(S_t), \end{equation}
where $T$ is a finite number of time-steps after which an infinite horizon episode can be truncated due to discounting. In deriving~\eqref{eq:stationary_ent}, $(a)$ follows from the definition of entropy, $(b)$ follows by assuming ergodicity, which allows us to replace an expectation over the state space with an expectation over time under all policies. The approximation here is due to the finite truncation of the infinite horizon to $T$. Step $(c)$ follows from the density estimation procedure.

\paragraph{\textit{Stationary} Distributions and \textit{Discounted} Future State Distributions:} 
For environments where the stationary distribution exists, for all policies, it is more natural to use the entropy of the \textit{stationary} state distribution as the regularizer. This is because we are interested in visiting all states in the state space, and not necessarily concerned with at what point in time the state is visited. However, there may be environments where the stationary distribution does not exist or environments in which the stationary distribution collapses to a unit measure on a single state. Episodic environments are examples of the latter where the support of the stationary state distribution only includes the terminal state. In such environments, it is more natural to use the \textit{discounted} state distribution (normalized discounted occupancy measure).

\subsection{Estimating the entropy of discounted future state distribution:} 

In practice, we use a neural density estimator for estimating the discounted state distribution, based on the states induced by the policy $\pi_{\theta}$. The training samples for the density estimator is obtained by rolling out trajectories under the policy $\pi_{\theta}$. We train a variational inference based density estimator (similar to a variational auto-encoder) to maximize variational lower bound $\log p(s)$, where for the discounted case, we denote this as $\log \tilde p(s)$, as given in ~\eqref{eq:tildep}~and~\eqref{eq:disc_ent}. We therefore obtain an approximation to the entropy of discounted future state distribution which can be used in the modified policy gradient objective, where for the \textit{discounted} case, with stochastic policies~\citep{sutton}, we have 

\begin{equation}
\label{eq:obj_discounted}
\setlength{\abovedisplayskip}{3pt}
\tilde{J}(\theta) = \expectation_{\pi_{\theta}} \Bigl[\sum_{t=0}^{\infty} \gamma^{t} r(S_t, A_t) - \lambda \log  d_{\alpha, \gamma, \pi_{\theta}}(s_t) \Bigm | S_0 \Bigr]
\setlength{\belowdisplayskip}{3pt}
\end{equation}

The objective in the \textit{stationary} case can be obtained by substituting the $d_{\alpha, \gamma, \pi_{\theta}}(s_t)$ with $d_{1, \pi_{\theta}}(s_t)$ in~\eqref{eq:obj_discounted}. The neural density estimator is independently parameterized by $\phi$, and is a function that maps the policy parameters $\theta$ to a state distribution. The loss function for this density estimator is the KL divergence between $ \texttt{KL}(q_{\phi}(Z \mid \theta) || p(Z|\theta))$. The training objective for our density estimator in the \textit{stationary} case is given by :
\begin{equation}
\label{eq:vae_lower_bound}
    \mathcal{L_{\gamma}}(\phi, \theta) = (1 - \gamma) \gamma^{k} \expectation_{q_{\phi}(Z| \theta)} \big[\log p_{\phi}(S|\theta)  \big] - KL\big(q_{\phi}(Z|\theta) || p(\theta)\big)
\end{equation}

Equation~\eqref{eq:vae_lower_bound} gives the expression for the loss function for training the state density estimator (which is the variational inference lower bound loss for estimating $log(p(s))$,i.e.,ELBO~\cite{DBLP:journals/corr/KingmaW13}. Here $\theta$ are the parameters of the policy network $\pi_{\theta}$, $\phi$ are the parameters of the density estimator. The novelty of our approach is that the density estimator takes as input the parameters of the policy network directly (similar to hypernetworks~\citep{krueger2017bayesian, ha2016hypernetworks}). The encoder then maps the policy parameters $\theta$ into the latent space $Z$ given by $q_{\phi}(Z\mid \theta)$ with a Gaussian prior over the policy parameters $\theta$. During implementation we feed the parameters of the last two layers of the policy network, assuming the previous layers extract the relevant state features and the last two layers map the obtained features to a distribution over actions. Hence $\theta$ only comprises of the weights of these last two layers ensuring computation tractability. We take this approach since the discounted future state distribution is a function of the policy parameters $\theta$.

Our overall gradient objective with the regularized update is therefore given by
$\tilde{J}(\theta) = \expectation_{\pi_{\theta}} \Bigl[\sum_{t=0}^{\infty} \gamma^{t} r(S_t, A_t) - \lambda  \mathcal{L_{\gamma}}(\phi, \theta) \Bigr]$, where $\mathcal{L_{\gamma}}(\phi, \theta)$ directly depends on the policy parameters $\theta$. This gives the regularized policy gradient update with the entropy of the discounted future state distribution, for stochastic policies as : 

\begin{equation}
\setlength{\abovedisplayskip}{3pt}
\label{eq:overall_gradient_update}
    \nabla_{\theta} \tilde{J}(\theta) = \expectation_{\pi_{\theta}} \Bigl[ \nabla_{\theta} \log \pi(A_t \mid S_t) Q^{\pi}(S_t, A_t) -\lambda \nabla_{\theta} \mathcal{L}_{\gamma}(\phi, \theta).  \Bigr],\text{where $\mathcal L_{\gamma}(\phi,\theta)=(1-\gamma)\gamma^t\mathcal L (\phi,\theta)$}
\end{equation}

\paragraph{\textit{Three Time-Scale Algorithm} :} Our overall gradient update in equation \ref{eq:overall_gradient_update} implies that before computing the policy gradient update, we need an estimate of the variational lower bound. This therefore requires implementing an added time-scale for updating $\mathcal{L_{\gamma}}(\phi, \theta)$, in addition to the existing two-time scales in the actor-critic algorithm \citep{konda2000actor}. Since these distributions affect only the update of the actor parameters, the learning rate for these distributions should be higher than the learning rate for the actor. Our approach requiring a separate density estimator, therefore leads to a three time-scale algorithm. As given in the appendix, we additionally formally state and prove a three time-scale algorithm, that provides convergence guarantees to locally optimal solutions under standard technical conditions \citep{borkar2009stochastic, kushner2003stochastic}.

\paragraph{Algorithm :} We summarize our main algorithm in Algorithm 1 given in the Appendix. We use a regular policy gradient or actor-critic algorithm to learn parameterized policy $\pi_{\theta}$ with any existing policy optimization method such as REINFORCE \citep{reinforce} or A2C \citep{mhiha2c}. Based on sampled rollouts, we only a require a separate density estimator described above. Our key step of the algorithm is to regularize the policy gradient update with 
$\mathcal{L}(\phi, \theta)$ for entropy regularization with the \textit{stationary} state distribution, or with $(1 - \gamma)\gamma^{t} \mathcal{L}(\phi, \theta)$ denoted by $\mathcal{L}_{\gamma}(\phi, \theta)$ for entropy regularization with the \textit{discounted} state distribution. 

\begin{algorithm}[!htb]
\caption{Entropy regularization with $\ent ( \hat{d_{\pi}} )$ }
\begin{algorithmic}
\label{algo:training}
\REQUIRE ~~ A policy $\pi_{\theta}$ and critic $Q_{\psi}(s,a)$ 
\REQUIRE ~~ A density estimator $p_{\phi}(s)$ and regularization weight $\lambda$
\FOR{episodes = 1 to \text{E}}
\STATE{Take action $a_{t}$,get reward $r_{t}$ and observe next state $s_{t+1}$}
\STATE{Store tuple ($s_t,a_t, r(s_{t+1}), s_{t+1},$) in $\mathcal{D}$}\\
        \uIf{mod(t,N)}{
        {Update critic parameters $\psi$ as policy evaluation}\\
        
        {Update density estimator $\phi$ to estimate $\log d_{\pi}(s)$ or $\log d_{\gamma, \pi}(s)$ by maximizing variational lower bound $\mathcal{L}(\phi, \theta)$ 
        }\\
        
        {Update policy parameters $\theta$ folowing \textit{any} policy gradient method according to
       $ \nabla_{\theta} \tilde{J}(\theta) = \Bigl[ \nabla_{\theta} \log \pi_{\theta}(A_t | S_t)$ $Q^{\psi}(A_t, S_t) - \lambda  \nabla_{\theta} \mathcal{L}_{\gamma}(\phi, \theta)    \Bigr]$}
        }
\ENDFOR 
\end{algorithmic}
\end{algorithm}

\section{Experiments}
In this section, we demonstrate our approach based on entropy regularization with the normalized discounted weighting of states, also known as the discounted future state distribution. 
We first verify our hypothesis to clarify our intuitions for the significance of the state distribution as a regularizer on simple toy domains. Our method can be applied on top of \textit{any} existing RL algorithm, and in our experiments we mostly use REINFORCE \citep{reinforce}, actor-critic \citep{konda2000actor} with non-linear function approximators and off-policy ACER algorithm \citep{acer}. We first demonstrate usefulness of our approach in terms of state space coverage, better exploration and faster learning in simple domains, and then extend to continuous control tasks to show significant performance improvements in standard domains. In all our experiments, we use $\gamma$-\textit{StateEnt} (or \textit{Discounted StateEnt}) for denoting entropy regularization with the discounted future state distribution, and \textit{StateEnt} for denoting the unnormalized counterpart of the discounted weighting of states.

\subsection{Entropy regularization in Exact Policy Gradients with $\ent(d_{\pi})$}

We first verify that entropy regularization with \textit{exact} discounted future state distribution $\ent (d_{\pi_{\theta}})$ can lead to benefits in policy optimization when used as a regularizer. We demonstrate this on three toy domains, varying the amount $\lambda$ of state distribution regularization, in the case where we can compute \emph{exact policy gradient} given by $J(\pi) = (I - \gamma P_{\pi})^{-1} R$. In all these examples, the optimal solution can be found with value iteration algorithm.

\begin{figure}[h]
\begin{center}
    \begin{subfigure}[b]{0.33\textwidth}
    \centering
        \includegraphics[width=\linewidth]{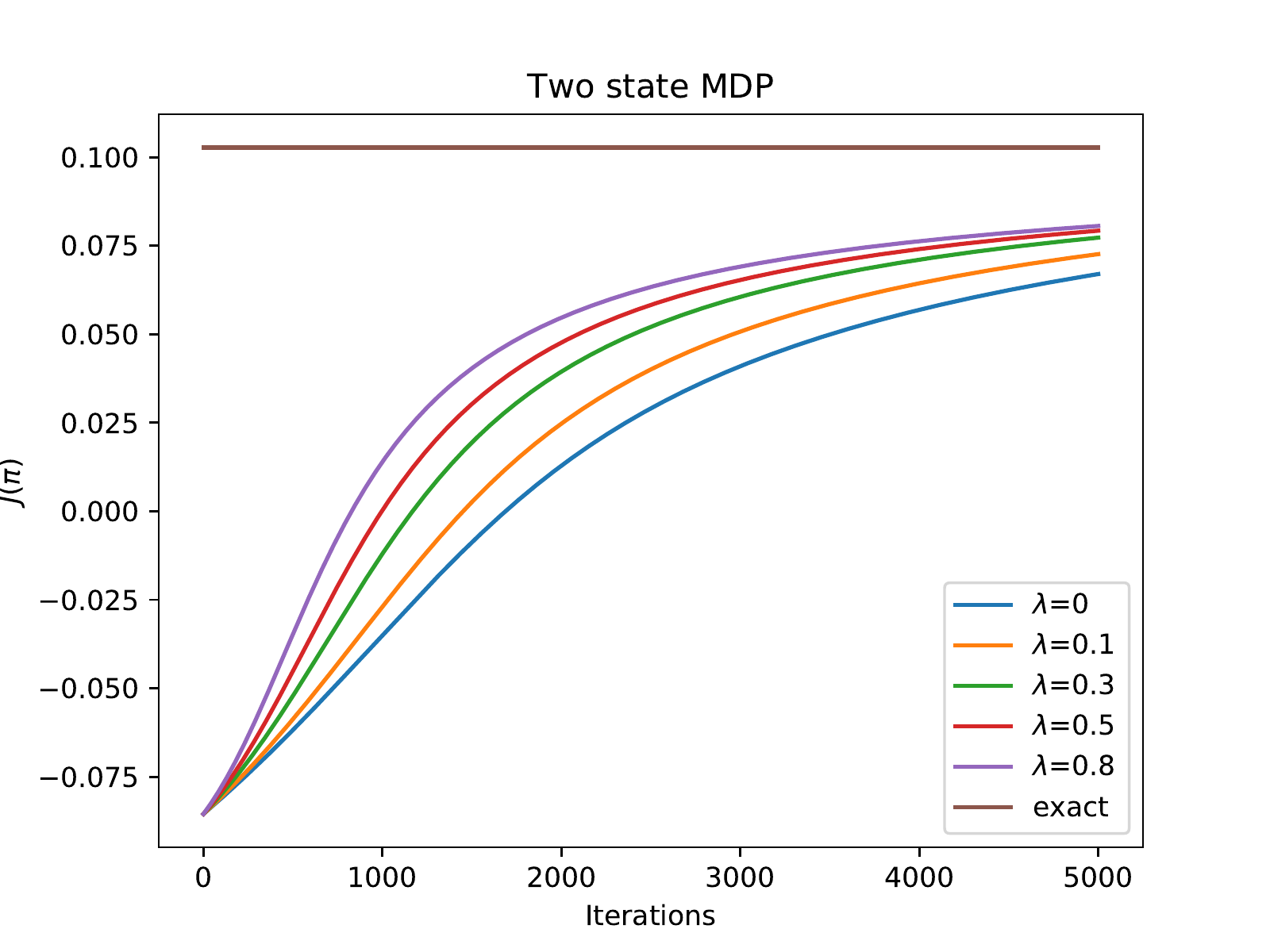}
        \label{fig:exact_pg}
        \caption{}
    \end{subfigure}\hfill
    \begin{subfigure}[b]{0.33\textwidth}
    \centering
        \includegraphics[width=\linewidth]{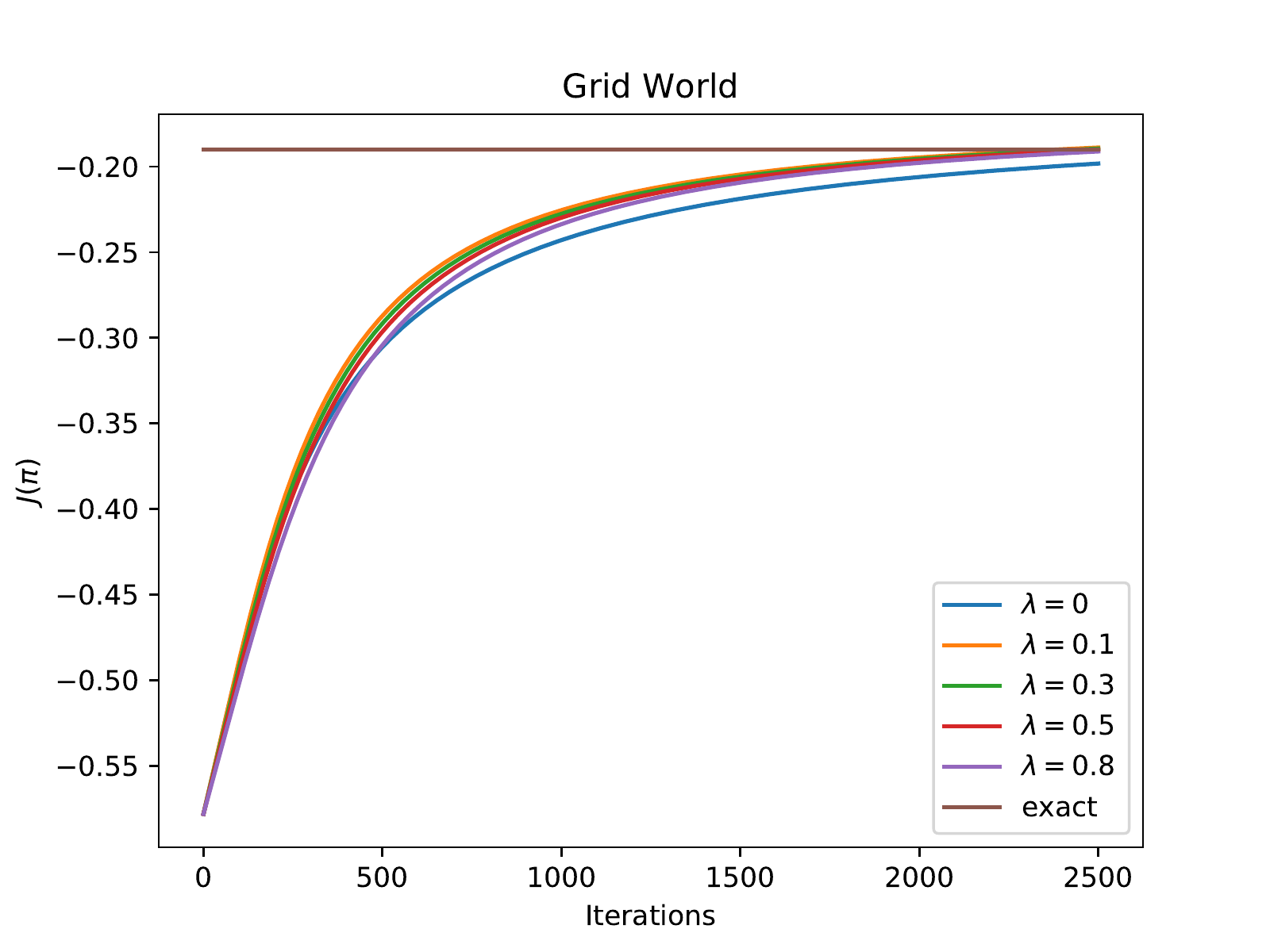}
        \label{fig:exact_pg2}
        \caption{}
    \end{subfigure}\hfill
     \begin{subfigure}[b]{0.33\textwidth}
    \centering
        \includegraphics[width=\linewidth]{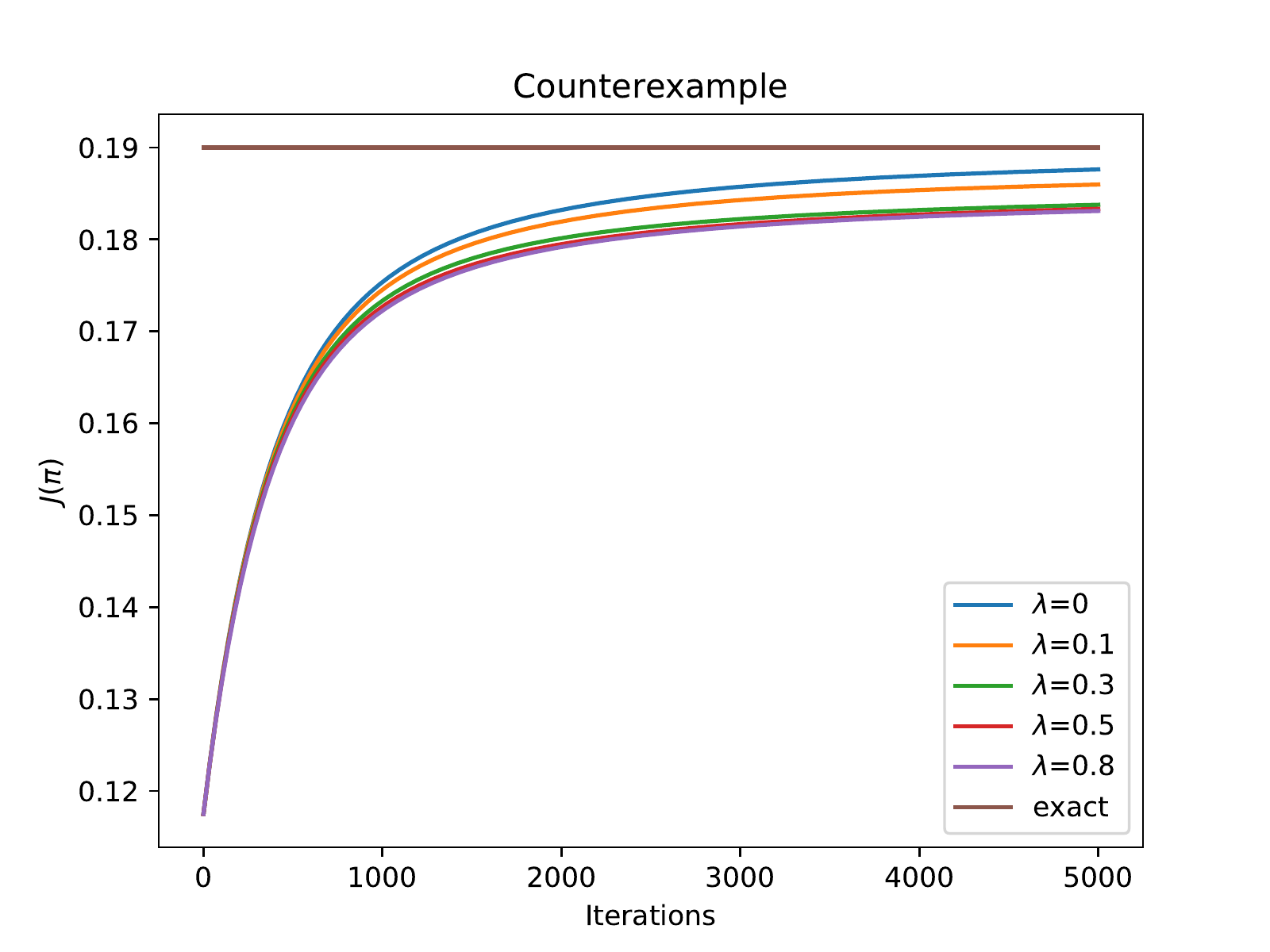}
        \label{fig:exact_pg3}
        \caption{}
    \end{subfigure}
    \caption{State distribution entropy regularized exact policy gradient can lead to a better converged solution on a simple two state MDP (taken from \citep{vf_polytope} (Figure (a)) and a discrete GridWorld domain (Figure (b)). The regularized objective has a faster convergence rate compared to the unregularized objective (with $\lambda=0.0$). Interestingly, in toy MDPs where there exists state aliasing, as in Figure (c) for MDP taken from the counterexample domain~\citep{imani2018off}, we find that state distribution entropy does not lead to significant improvements. This is an interesting result justifying that state space exploration may not necessarily be needed in all MDPs, especially when states are aliased. It is important to note that improved exploration does not necessarily mean faster convergence in policy based methods. For instance, in several goal based problems, the stationary distribution corresponding to an optimal policy is a delta distribution, which has the least entropy, whereas our regularizer tries to maximize this entropy. }
    \label{fig:twostateMDP_comp}
\end{center}    
\end{figure}


\subsection{Toy Domains}

Having verified our hypothesis in figure \ref{fig:twostateMDP_comp}, we now present our approach based on separately learning a density estimator for the state distribution, on tabular domains with actor-critic algorithms. For these tasks, we use a one hot encoded state representation with a one layer network for the policies, value functions and the state distribution estimator. We compare our results for both the \textit{stationary} and \textit{discounted} state distributions, with a baseline actor-critic (with $\lambda=0.0$ for the regularizer). Figure \ref{fig:toy_actor_critic} summarizes our results.

\begin{figure}[h]
\begin{center}
    \begin{subfigure}[b]{0.33\textwidth}
    \centering
        \includegraphics[width=\linewidth]{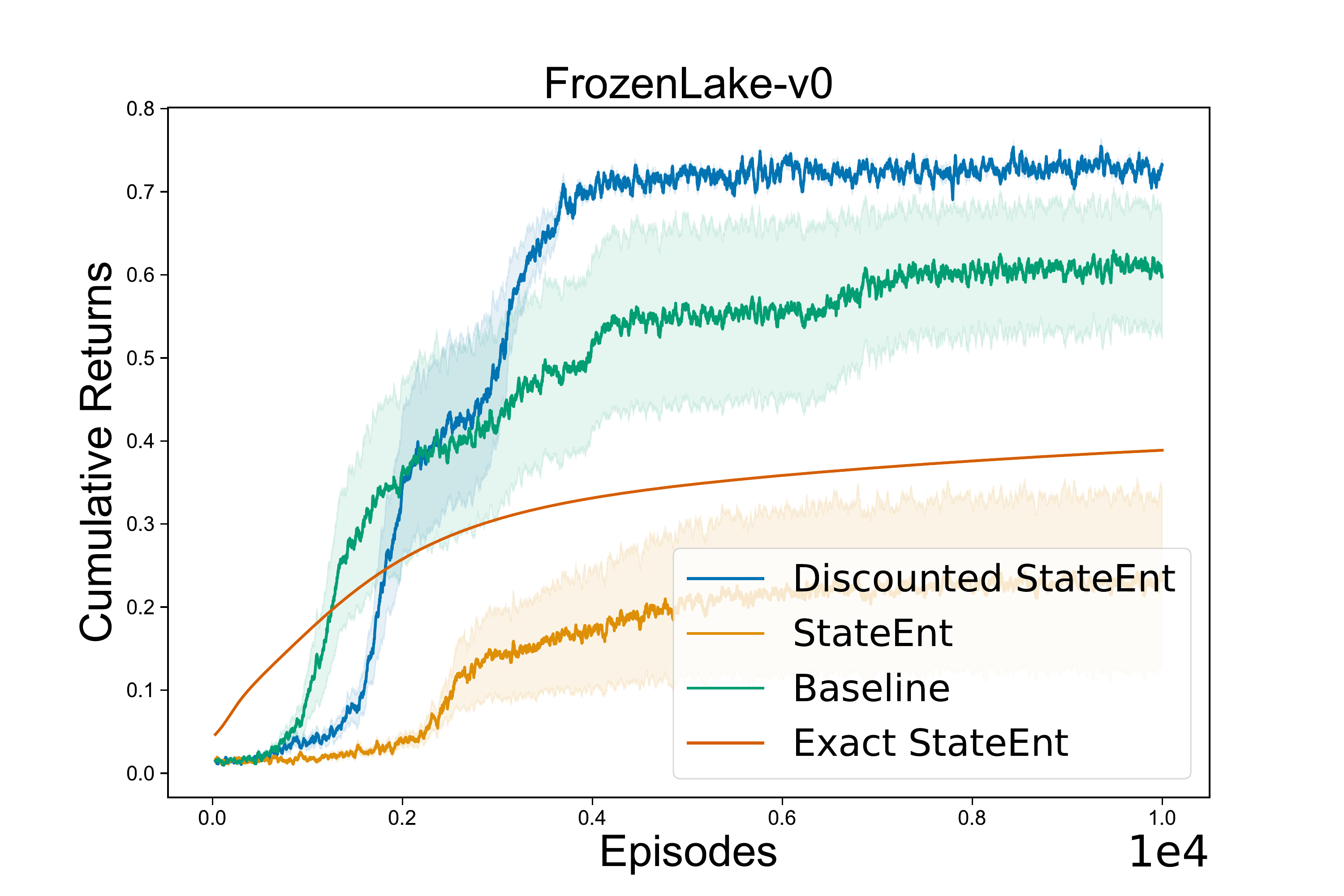}
        \label{fig:frozelake_comp}
    \end{subfigure}\hfill
    \begin{subfigure}[b]{0.33\textwidth}
    \centering
        \includegraphics[width=\linewidth]{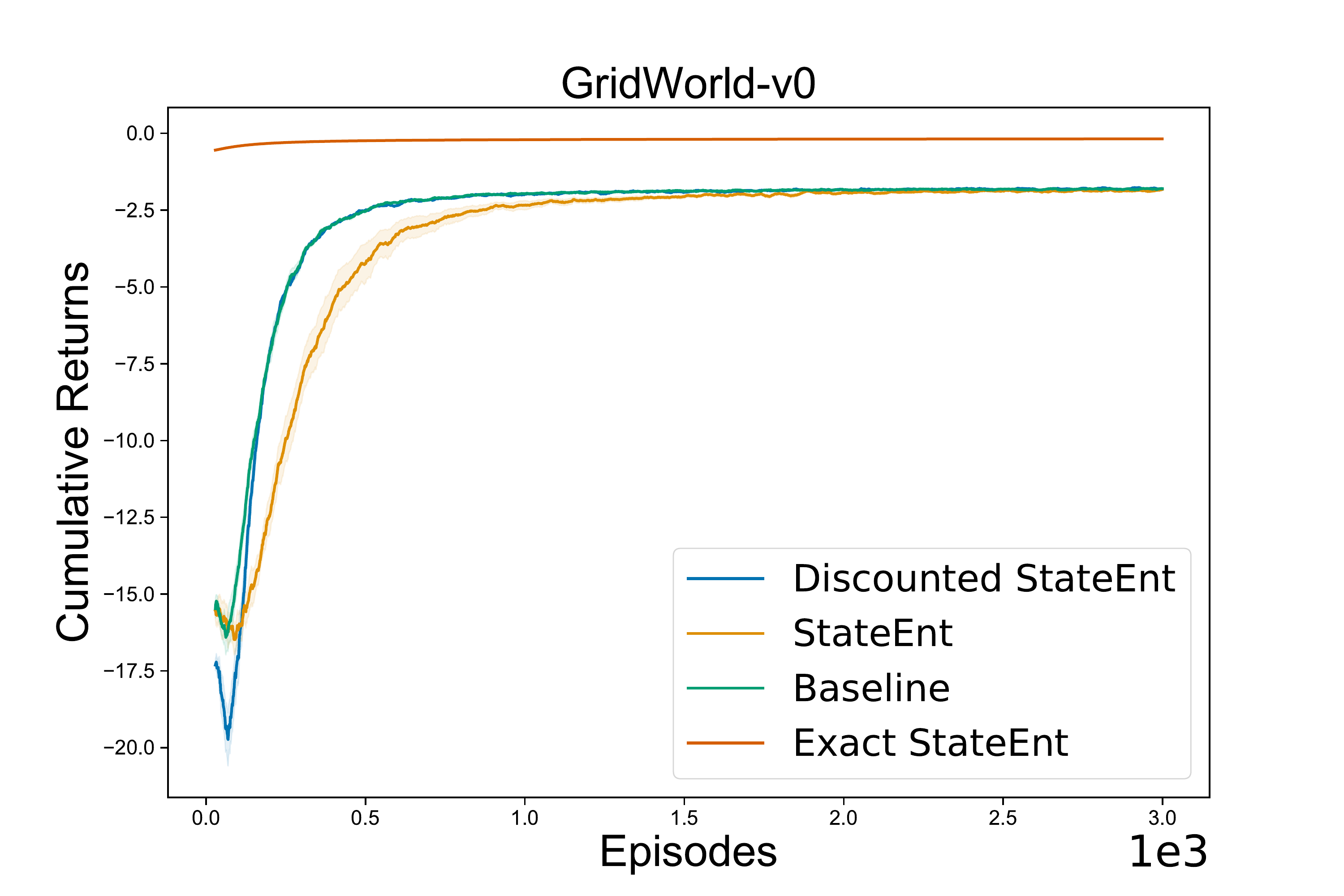}
        \label{fig:gridworld_comp}
    \end{subfigure}\hfill
    \begin{subfigure}[b]{0.33\textwidth}
    \centering
        \includegraphics[width=\linewidth]{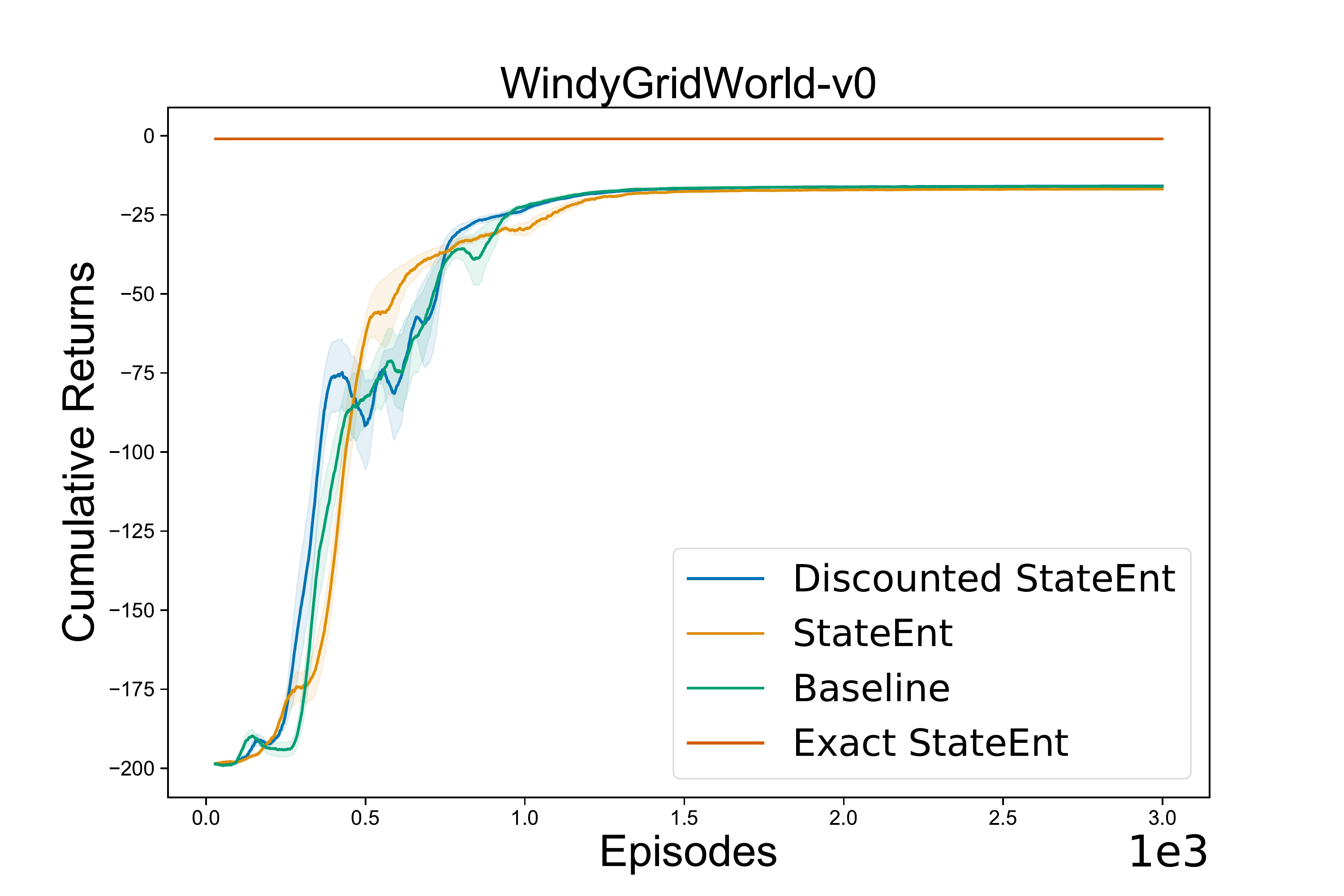}
        \label{fig:windygrid_comp}
    \end{subfigure}  

\caption{We show benefits of state distribution entropy regularization on toy domains, especially hard exploration tabular tasks such as FrozenLake. In all the tasks, we find that regularization with entropy of \textit{discounted} state distribution performs significantly better compared to baseline and regularization with the unnormalized state distribution. In all tasks, we use $\lambda=0.1$ for our methods.} 
\label{fig:toy_actor_critic}
\end{center}    
\end{figure}

\subsection{Complex Sparse Reward GridWorld Tasks}

We then demonstrate the usefulness of our approach, with entropy of stationary (denoted StateEnt) and discounted  (denoted $\gamma$ StateEnt) state distributions, on sparse reward complex gridworld domains. 
These are hard exploration tasks, where the agent needs to pass through slits and walls to reach the goal state (placed at the top right corner of the grid). We use REINFORCE \citep{reinforce} as the baseline algorithm, and for all comparisons use standard policy entropy regularization (denoted PolicyEnt for baseline).

\begin{figure}[h]
\begin{center}
    \begin{subfigure}[b]{0.33\textwidth}
    \centering
        \includegraphics[width=\linewidth]{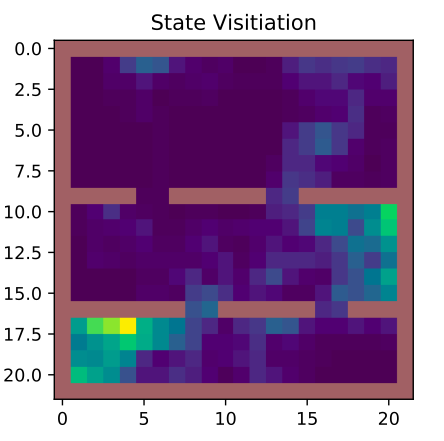}
        \caption{$\gamma$ StateEnt}
    \end{subfigure}\hfill
    \begin{subfigure}[b]{0.33\textwidth}
    \centering
        \includegraphics[width=\linewidth]{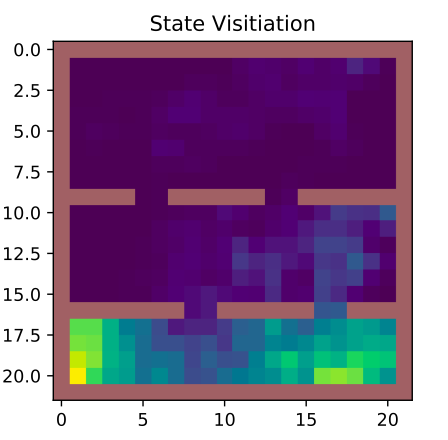}
        \caption{StateEnt}
    \end{subfigure}\hfill
    \begin{subfigure}[b]{0.33\textwidth}
    \centering
        \includegraphics[width=\linewidth]{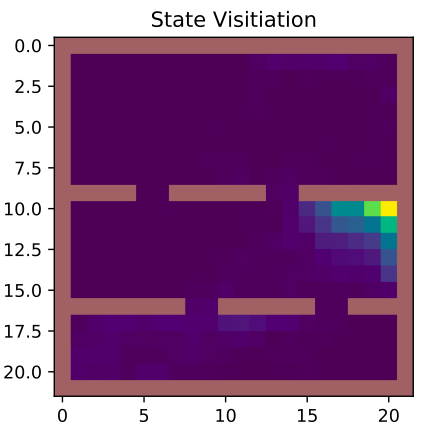}
        \caption{PolicyEnt}
    \end{subfigure}
    \caption{State space coverage on complex sparse reward double-slit-double-wall gridworld domains. Figure shows that regularization with the \textit{discounted} state distribution indeed has more useful effects in terms of exploration and state space coverage compared to regularization with policies. We also find that state space coverage is more with entropy of \textit{discounted} state distribution compared to \textit{stationary} state distribution. All state visitation heatmaps are shown after only $1000$ timesteps of initial training phase.}
    \label{fig:grid_reinforce}
\end{center}    
\end{figure}

\subsection{Simple Benchmark Tasks}

We extend our results with standard deep RL toy benchmark tasks, using on-policy Reinforce \citep{williams1991function} and off-policy ACER  \citep{acer} algorithms. Figure \ref{fig:toy_benchmark} shows performance improvements with discounted state distribution entropy regularization across all algorithms and tasks considered.

\begin{figure}[h]
\begin{center}
    \begin{subfigure}[b]{0.33\textwidth}
    \centering
        \includegraphics[width=\linewidth]{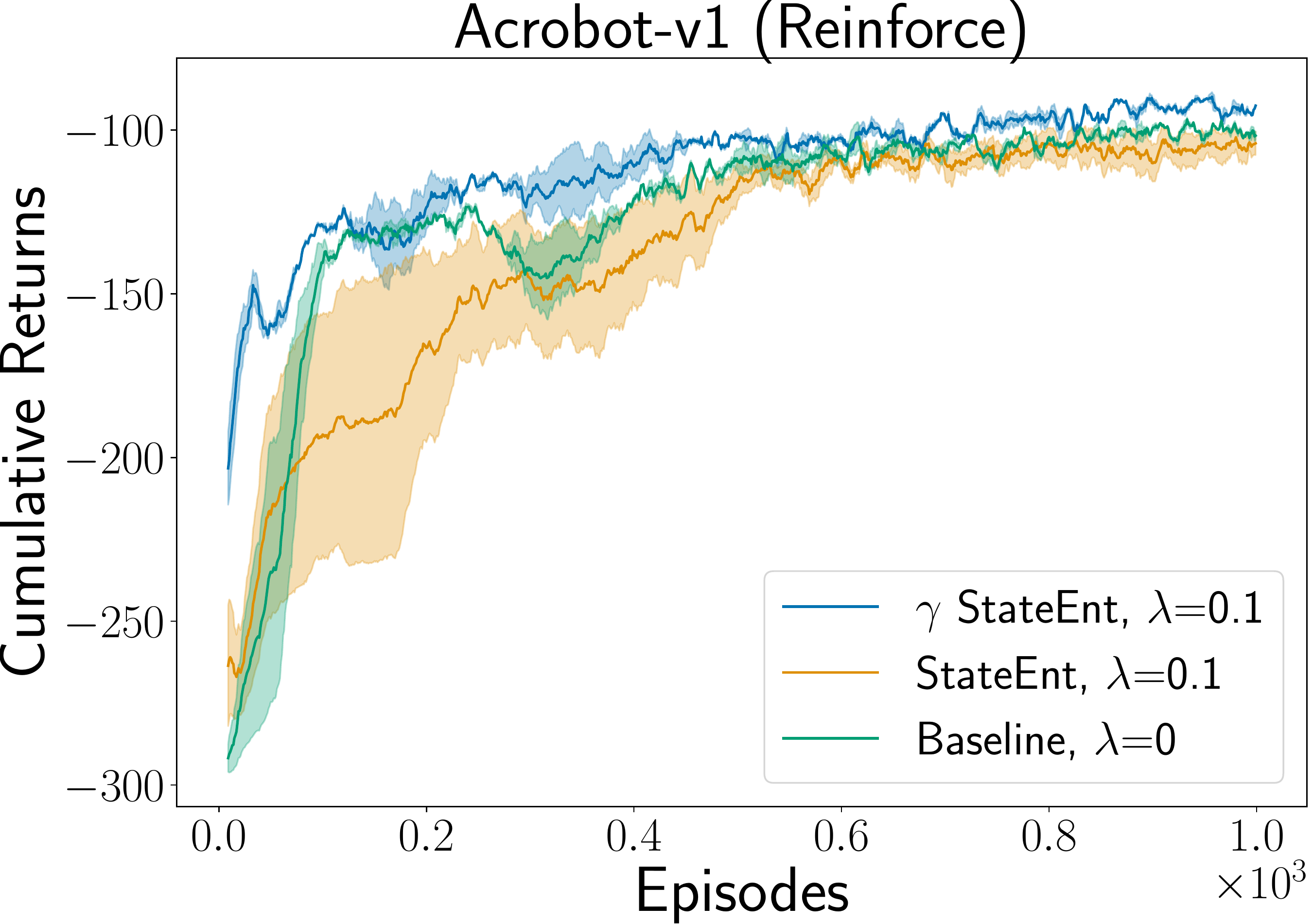}
    \end{subfigure}\hfill
    \begin{subfigure}[b]{0.33\textwidth}
    \centering
        \includegraphics[width=\linewidth]{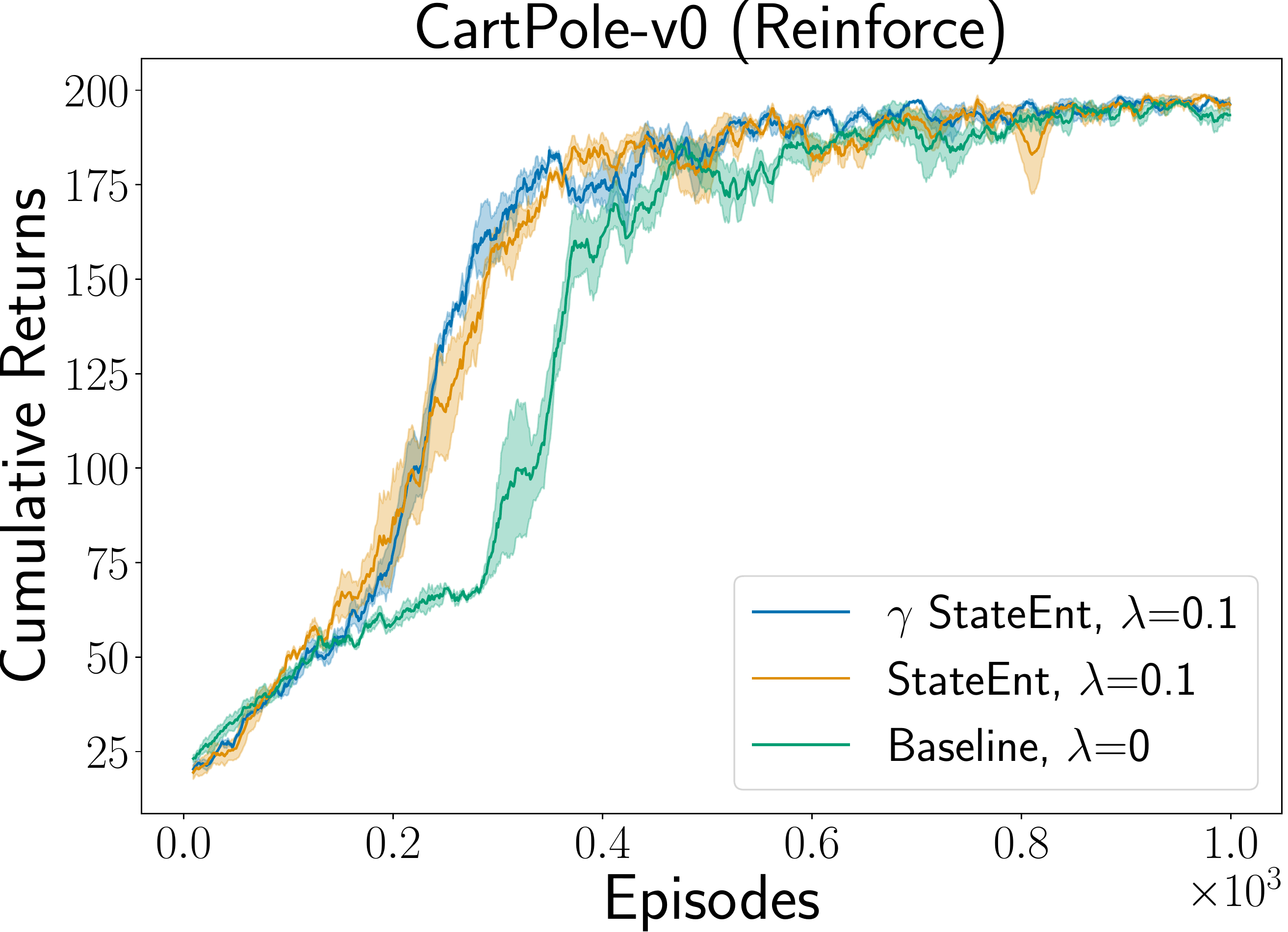}
    \end{subfigure}\hfill
    \begin{subfigure}[b]{0.33\textwidth}
    \centering
        \includegraphics[width=\linewidth]{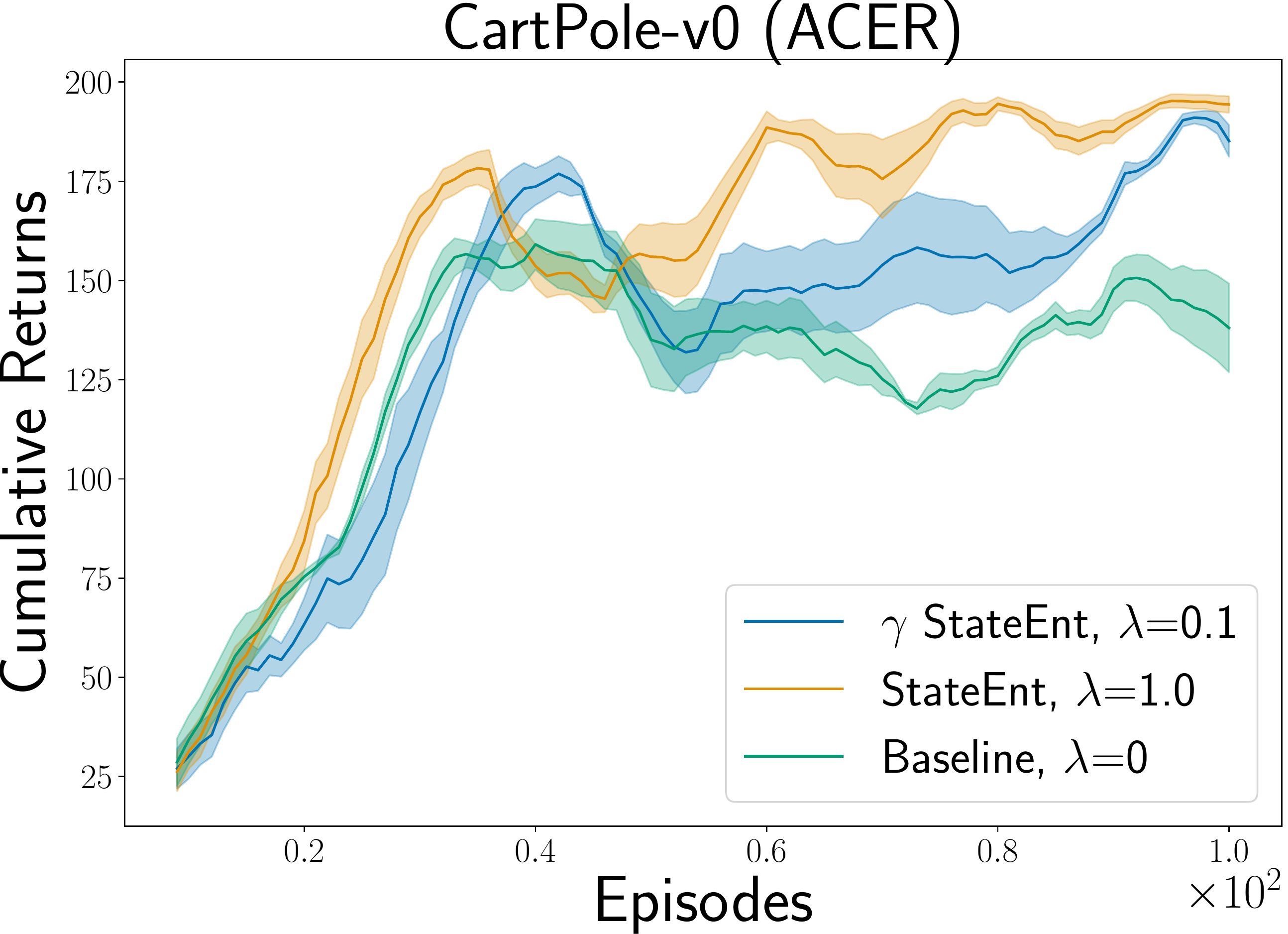}
    \end{subfigure}
    \caption{Performance improvements with state distribution entropy regularization on standard benchmark tasks. We find consistent improvements in performance with $\gamma$ StateEnt that uses entropy of discounted state distributions. Further experimental results and ablation studies are given in the Appendix.}
    \label{fig:toy_benchmark}
\end{center}    
\end{figure}

\subsection{Continuous Control Tasks}
Finally, we extend our proposed regularized policy gradient objective on standard continuous control Mujoco domains \citep{conf/iros/TodorovET12}. First, we examine the significance of the state distribution entropy regularizer in  DDPG algorithm~\citep{DDPG}. In DDPG, policy entropy regularization cannot be used due to existence of deterministic policies \citep{dpg}. In Figure \ref{fig:ddpg_control}, we show that by inducing policies to maximize state space coverage, we can enhance exploration that leads to significant improvements on standard benchmark tasks, especially in environments where exploration in the state space plays a key role (e.g HalfCheetah environment)

\begin{figure}[h]
\begin{center}
    \begin{subfigure}[b]{0.33\textwidth}
    \centering
        \includegraphics[width=\linewidth]{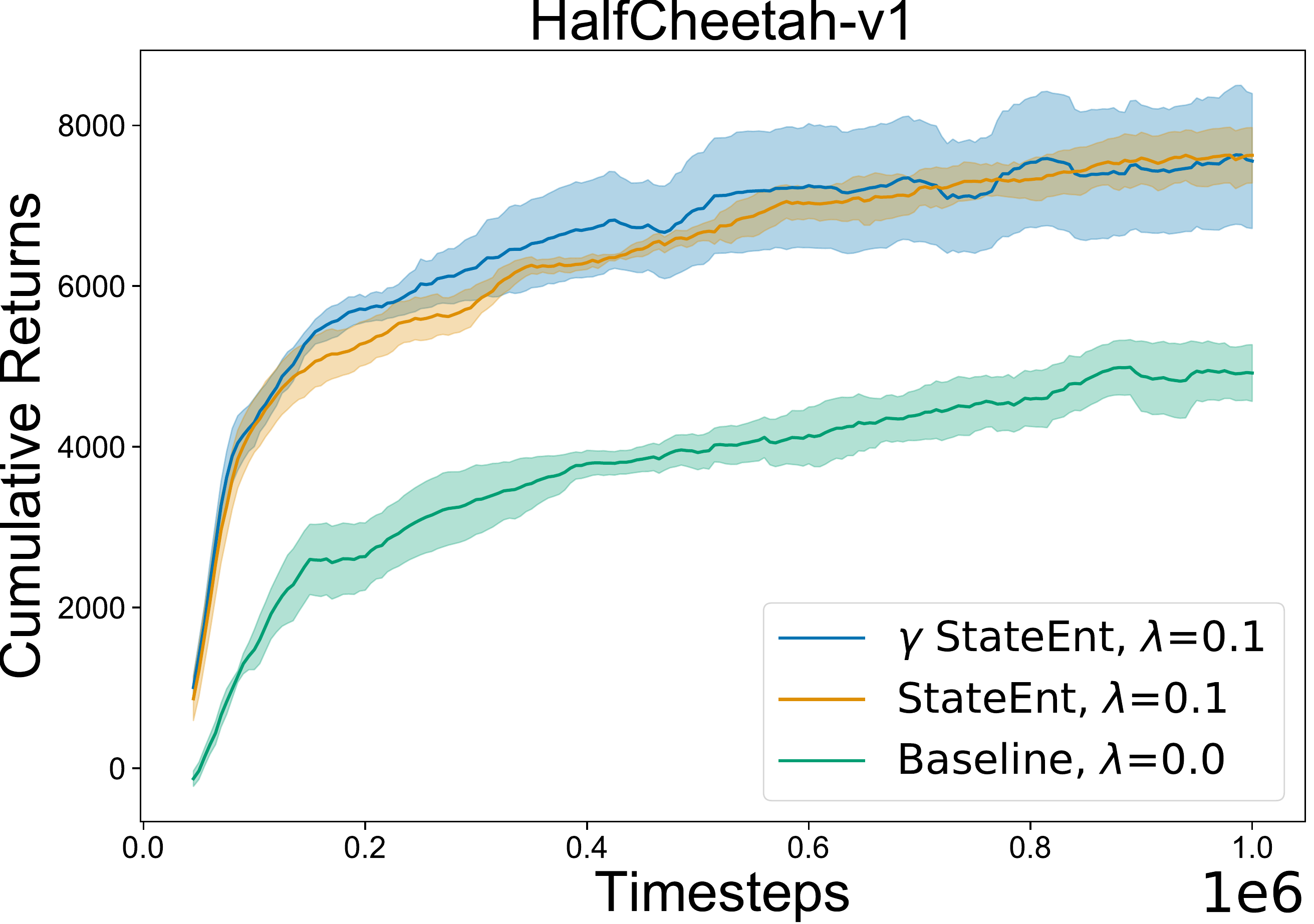}
    \end{subfigure}\hfill
    \begin{subfigure}[b]{0.33\textwidth}
    \centering
        \includegraphics[width=\linewidth]{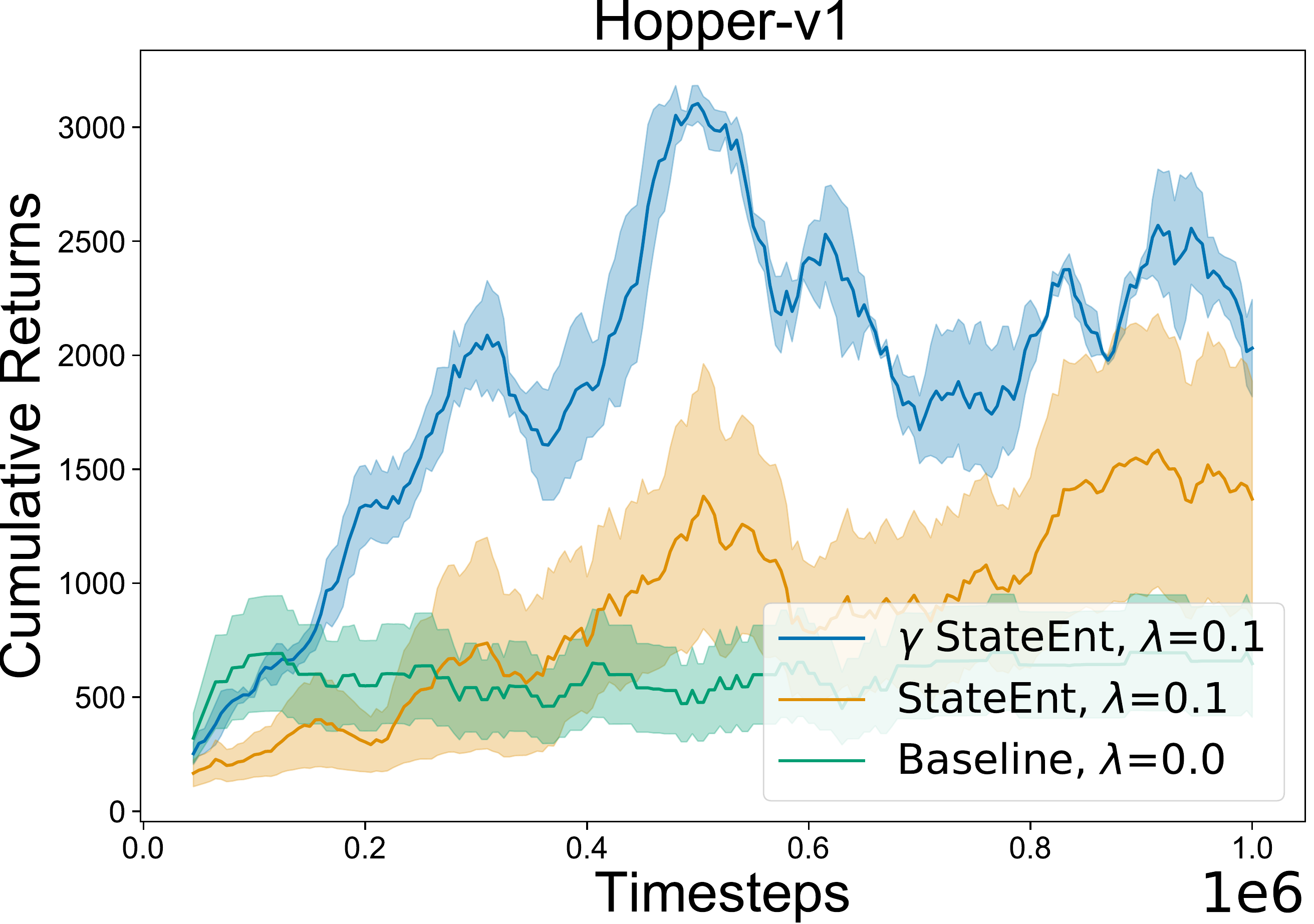}
    \end{subfigure}\hfill
    \begin{subfigure}[b]{0.33\textwidth}
    \centering
        \includegraphics[width=\linewidth]{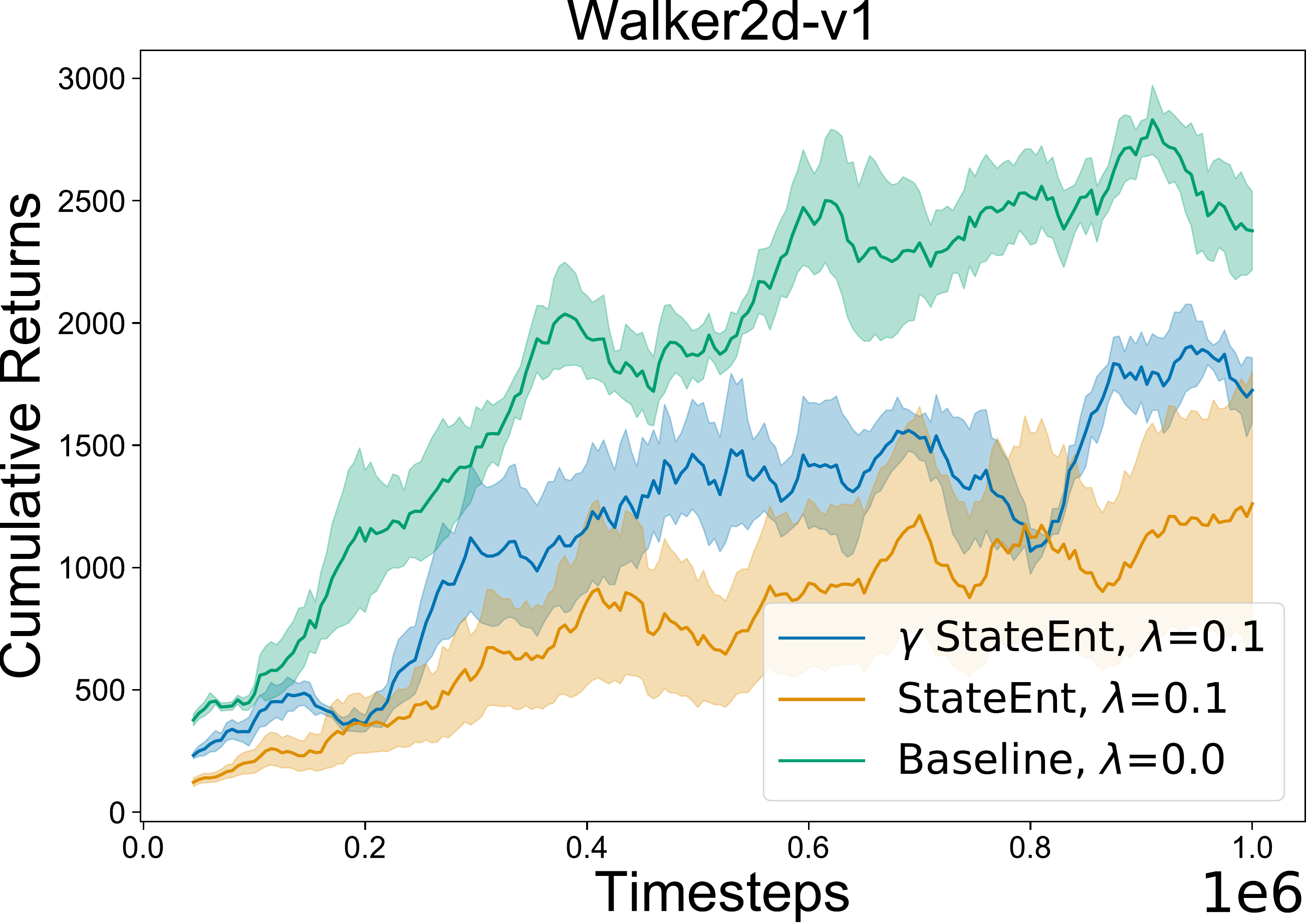}
    \end{subfigure}    
\caption{Significant performance improvements with state entropy regularization with DDPG, especially in tasks like HalfCheetah-v1 and Hopper-v1 where exploration plays a key role. In all the experiments, we use a state entropy regularization coefficient of $\lambda=0.1$ for our approach, and $\lambda=0.0$ for the baseline DDPG.  Experiment results are averaged over $10$ random seeds \citep{DBLP:conf/aaai/0002IBPPM18}. Further experimental results and ablation studies with different $\lambda$ weightings are given in figures~\ref{fig:ddpg_ablation_1},~\ref{fig:ddpg_ablation_2}
in the Appendix}
\label{fig:ddpg_control}
\end{center}    
\end{figure}

We then analyze the significance of state entropy regularization on the soft actor-critic (SAC) framework \citep{sac}. SAC depends on regularizing the policy update with the entropy of the policy. We use the state distribution entropy as an added regularizer to existing maximum policy entropy framework of SAC, and compare performance on the same set of control tasks, as shown in Figure~\ref{fig:sac_control}. 

\begin{figure}[h]
\begin{center}
    \begin{subfigure}[b]{0.33\textwidth}
    \centering
        \includegraphics[width=\linewidth]{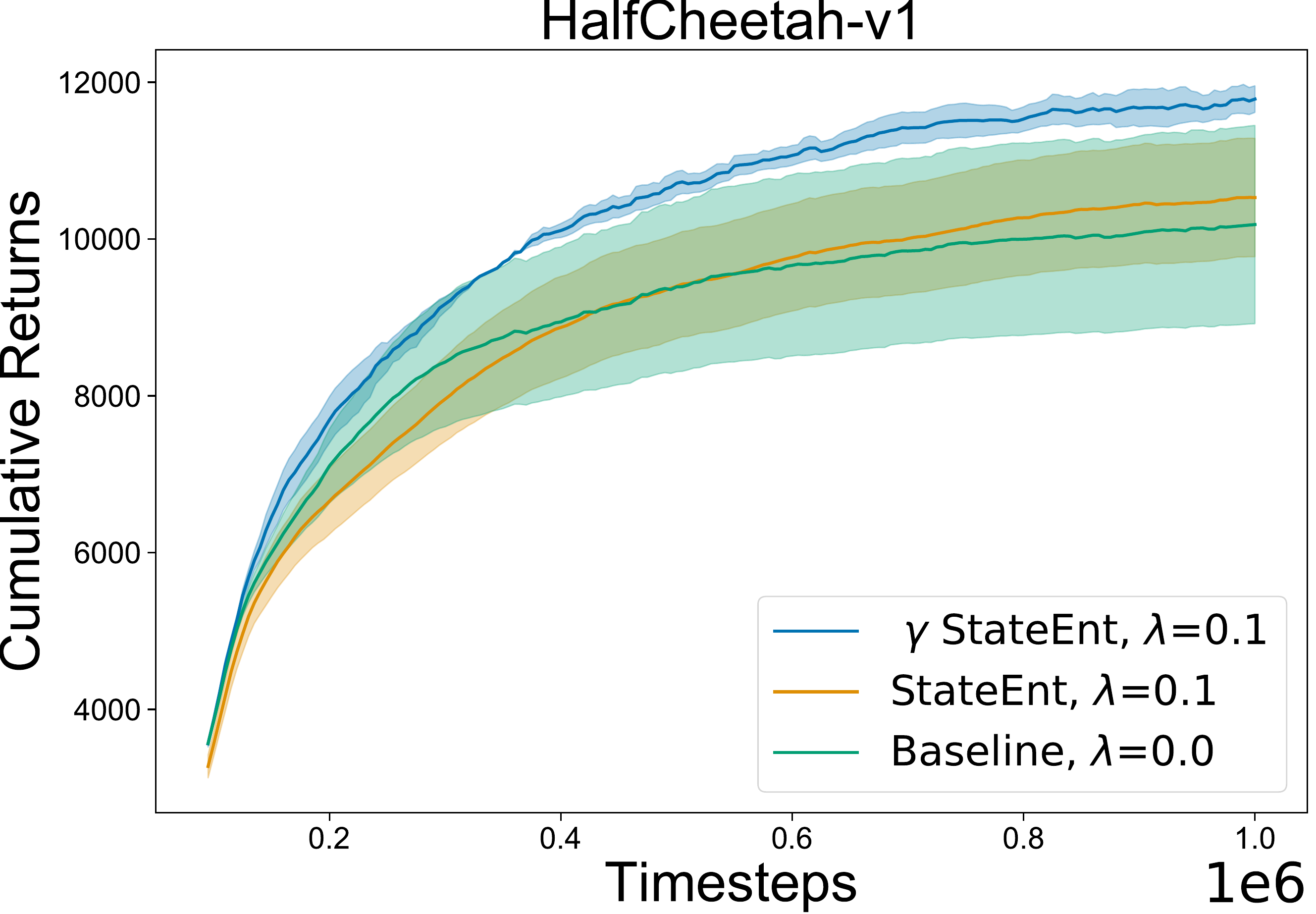}
    \end{subfigure}\hfill
    \begin{subfigure}[b]{0.33\textwidth}
    \centering
        \includegraphics[width=\linewidth]{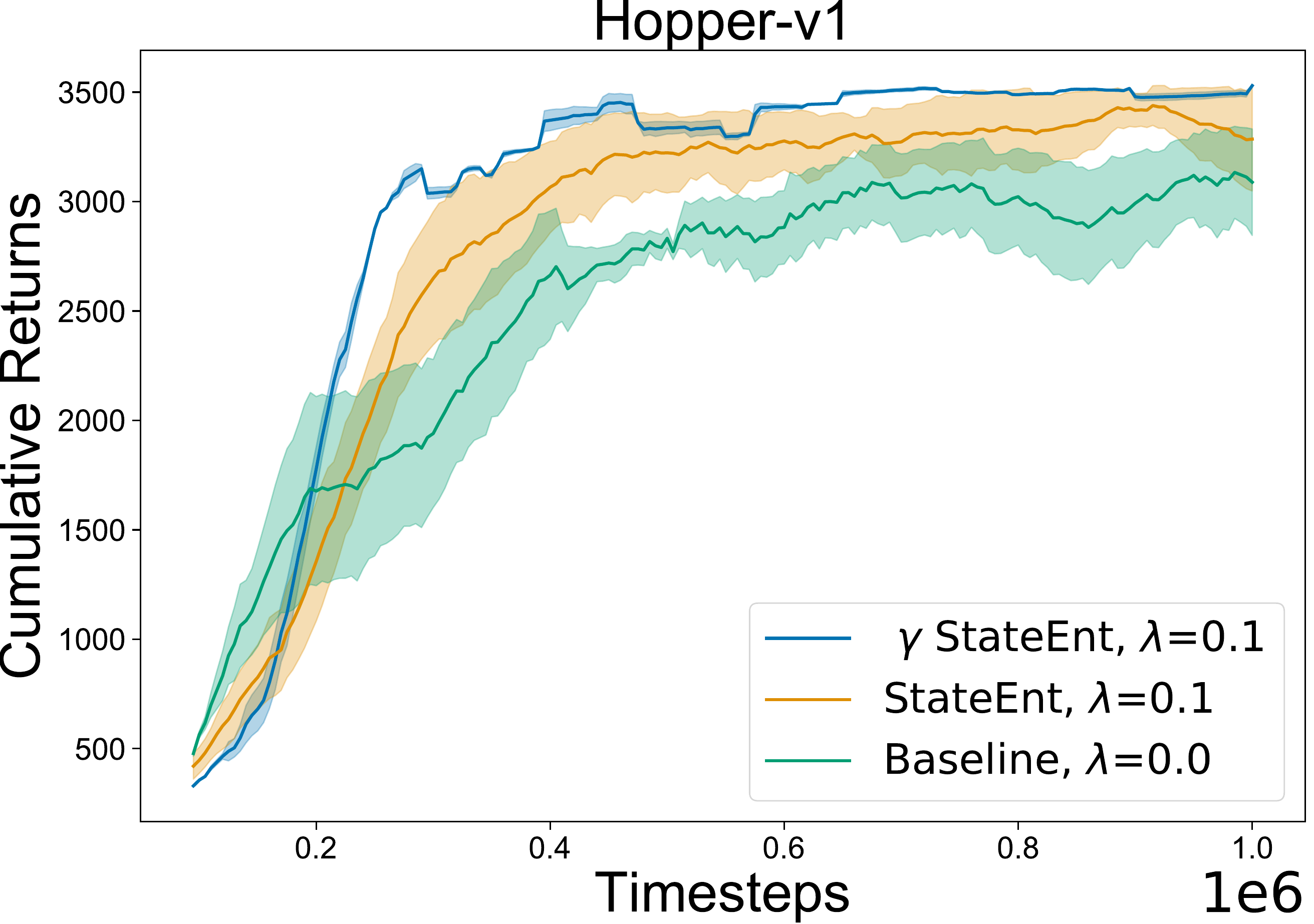}
    \end{subfigure}\hfill
    \begin{subfigure}[b]{0.33\textwidth}
    \centering
        \includegraphics[width=\linewidth]{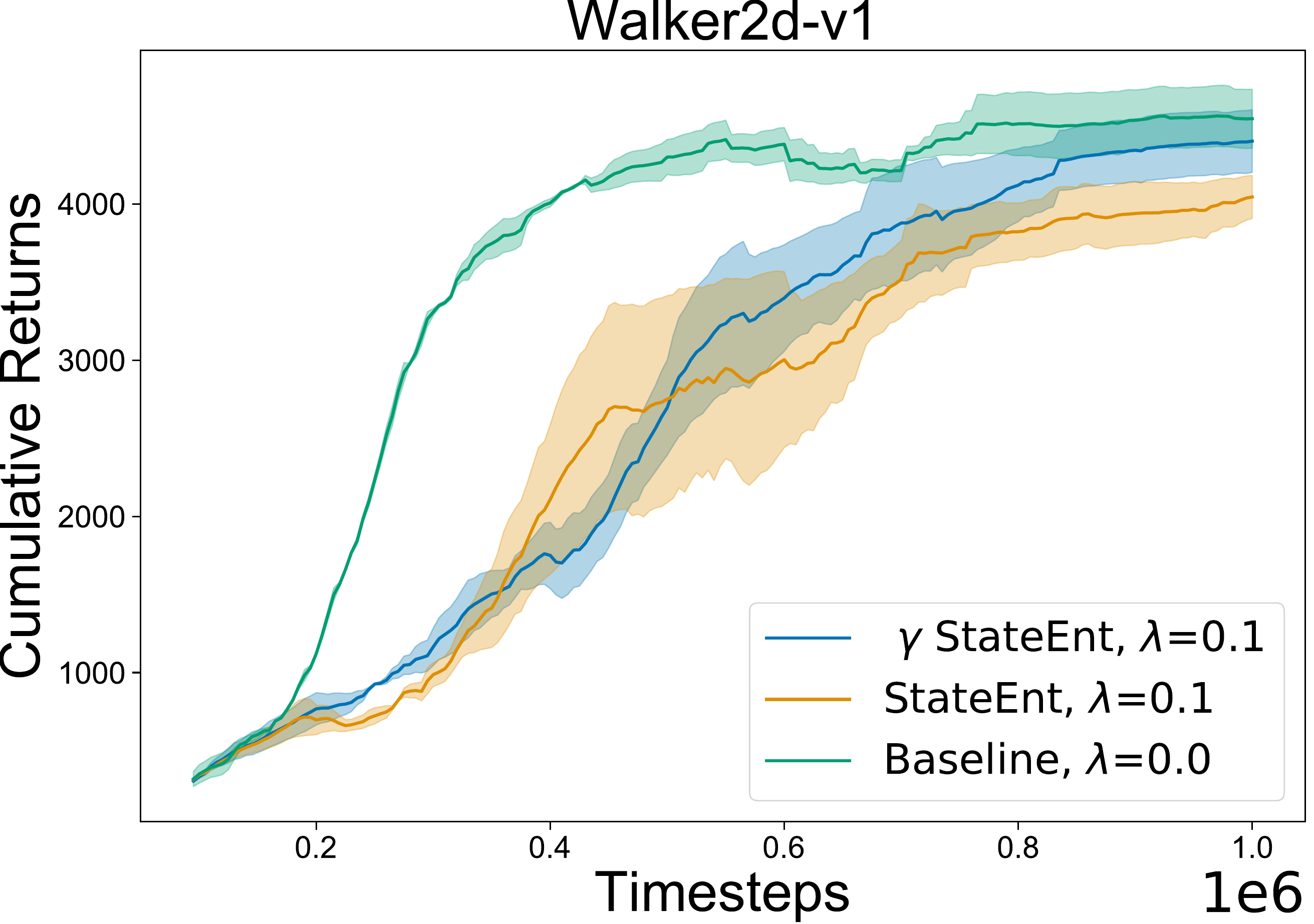}
    \end{subfigure}    
\caption{Significant performance improvements with state entropy regularization with SAC, especially in tasks such as HalfCheetah-v1 where exploration plays a key role. We find $\gamma$ StateEnt for the discounted state distribution entropy regularization to achieve significant performance benefits compared to baseline SAC. Experiment results are averaged over $10$ random seeds \citep{DBLP:conf/aaai/0002IBPPM18}. Further experimental results and ablation studies with $\lambda$ regularization weightings are given in the appendix in figures~\ref{fig:sac_ablation_1},~\ref{fig:sac_ablation_2}. }
\label{fig:sac_control}
\end{center}    
\end{figure}

\section{Related Work}

Existing works in the literature achieve exploration by introducing entropy regularization during policy optimization~\citep{williams1991function,mnih2016asynchronous}. Entropy regularization is commonly used in deep RL tasks \citep{sac, Ziebart} where
preventing policies from quickly collapsing to a deterministic value is the key step for ensuring sufficient exploration. Information theoretic regularizers are also proposed in the literature that help policies to extract useful structures and priors from the tasks. This is often achieved with goal conditioned policies \citep{infobot} or taking information asymmetry under consideration \citep{galashov2019information}. These type of regularization have shown to help with smoothing out the optimization landscape \citep{understanding_entropy}, providing justification of why such methods may work well in practice. Additionally, they induce diversity in the learned policies~\citep{bachman2018vfunc, eysenbach2019diversity} thereby maximizing state space coverage. Other approaches to achieve exploration include reward shaping and adding exploration bonuses \citep{pathak2017curiosity} with the rewards. \citep{Bellmare_Count} introduced a notion of  pseudo count which is derived from sequence of visited states by measuring the number of state occurrences. In~\citep{bellemare_density} this pseudo count is defined in terms of the density model $\rho$ which can be trained on the sequence of states given a fixed policy.

\section{Summary and Discussion}

In this work, we provided a practically feasible algorithm for entropy regularization with the state distributions in policy optimization. We present a practically feasible algorithm, based on estimating the discounted future state distribution, for both episodic and infinite horizon environments. The key to our approach relies on using a density estimator for the state distribution $d_{\pi_{\theta}}$, which is a direct function of the policy parameters $\theta$ itself, such that we can regularize policy optimization to induce policies that can maximize state space coverage. We demonstrate the usefulness of this approach on a wide range of tasks, starting from simple toy tasks to sparse reward gridworld domains, and eventually extending our results to a range of continuous control suites. We re-emphasize that our approach gives a practically convenient handle to deal with the \textit{discounted} state distribution, that are difficult to work with in practice. In addition, we provided a proof of convergence of our method as a three time-scale algorithm, where learning a policy depends on both a value function and a state distribution estimation. 





\bibliography{neurips_2019}
\bibliographystyle{unsrtnat}

\newpage

\section{Appendix : Entropy Regularization with Discounted andStationary State Distribution in Policy Gradient}

\subsection{Convergence of the Three Time-Scale Algorithm}

\begin{corollary}
Let us consider the following iterative updates:
\begin{align}
    \phi_{k+1} &= \phi_{k} - a_k\grad_{\phi}{\mathcal{L}(\phi_k, \theta_k)} \label{eq:phi_update}\\
    \psi_{k+1} &= \psi_k - b_k\grad_{\psi}{TD(\psi_k, \theta_k)}\label{eq:psi_update} \\
    \theta_{k+1} &= \theta_k + c_k \grad_{\theta}{\tilde J(\phi_k, \psi_k, \theta_k)}\label{eq:theta_update}
\end{align},
where $a_k$, $b_k$ and $c_k$ are learning rates.
Furthermore, if we have the following: The parameters $\phi$, $\psi$ and $\theta$ belong to convex and compact metric spaces. The gradients of the density estimator loss function, $\grad_{\phi}{\mathcal{L}(\phi_k, \theta_k)}$, TD loss function $\grad_{\psi}{TD(\psi_k, \theta_k)}$ and the policy gradient $\grad_{\theta}{J(\phi_k, \psi_k, \theta_k)}$ are Lipschitz continuous in their arguments. All these gradient estimators are unbiased and have bounded variance. The ODE corresponding to~\eqref{eq:phi_update} has a unique globally asymptotically stable fixed point, which is Lipschitz continuous with respect to the actor parameters $\theta_k$. The ODE corresponding to~\eqref{eq:psi_update} has a unique globally asymptotically stable fixed point, which is Lipschitz continuous with respect to the actor parameters $\theta_k$. The ODE corresponding to~\eqref{eq:theta_update} has locally asymptotically stable fixed points. The learning rates satisfy:
\[
\sum_{k}a_k = \infty, \sum_{k}b_k = \infty, \sum_{k} c_k = \infty, \sum_{k}a^2_k < \infty, \sum_{k}b^2_k < \infty, \sum_{k}c^2_k < \infty,
\]
\[
\lim_{k \to \infty}\frac{c_k}{a_k} \to 0, \text{ and } \ \lim_{k \to \infty}\frac{c_k}{b_k} \to 0.
\]
then almost surely: The density estimator converges to the stationary state distribution/discounted state distribution corresponding to the actor with parameters $\theta$. The critic converges to the right action-value function for the actor with parameters $\theta$. The actor converges to a policy that is a local maximizer of the modified performance objective, which is a regularized variant of the performance objective.
\end{corollary}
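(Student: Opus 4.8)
The plan is to recognize the three coupled recursions \eqref{eq:phi_update}--\eqref{eq:theta_update} as an instance of the multi-timescale stochastic approximation framework of Borkar and to invoke the associated ODE method. First I would rewrite each update in the canonical form $x_{k+1} = x_k + \eta_k\bigl(h(x_k) + M_{k+1}\bigr)$, where $\eta_k$ is the appropriate step size, $h$ is the mean field (the true gradient of $\mathcal{L}$, of the TD objective, or of $\tilde J$), and $M_{k+1}$ is the stochastic error. The unbiasedness and bounded-variance hypotheses guarantee that, relative to the natural filtration $\mathcal{F}_k = \sigma(\phi_j,\psi_j,\theta_j : j \le k)$, each $M_{k+1}$ is a square-integrable martingale-difference sequence, so the associated noise-accumulation martingales converge almost surely by the stepsize conditions $\sum_k \eta_k^2 < \infty$. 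Compactness and convexity of the parameter sets (together with projection onto these sets where needed) supply the boundedness of iterates required by the method.

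The core of the argument is the timescale separation. Because $\lim_k c_k/a_k = 0$ and $\lim_k c_k/b_k = 0$, the actor parameter $\theta$ evolves arbitrarily slowly relative to $\phi$ and $\psi$ and is therefore \emph{quasi-static} on the fast scale. A point worth emphasizing is that $\mathcal{L}$ depends only on $(\phi,\theta)$ and the TD objective only on $(\psi,\theta)$; the two fast iterates are decoupled given $\theta$, so I would treat $(\phi,\psi)$ jointly as a single fast variable tracking the product ODE $\dot\phi = -\nabla_\phi\mathcal{L}(\phi,\theta)$, $\dot\psi = -\nabla_\psi TD(\psi,\theta)$ with $\theta$ frozen. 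By the stated global-asymptotic-stability hypotheses, this ODE has the unique attractor $(\phi^\ast(\theta),\psi^\ast(\theta))$, and the assumed Lipschitz dependence of each fixed point on $\theta$ lets standard two-timescale lemmas conclude $\|\phi_k - \phi^\ast(\theta_k)\| \to 0$ and $\|\psi_k - \psi^\ast(\theta_k)\| \to 0$ almost surely.

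With the fast iterates equilibrated, the slow recursion \eqref{eq:theta_update} is shown to track the single ODE $\dot\theta = \nabla_\theta \tilde J(\phi^\ast(\theta),\psi^\ast(\theta),\theta)$, obtained by substituting the equilibrated values into the $\theta$-mean field; the substitution error is absorbed into a vanishing perturbation because of the tracking established above and the Lipschitz continuity of $\nabla_\theta \tilde J$. Invoking the hypothesis that this limiting ODE has locally asymptotically stable fixed points, the Kushner--Clark / Borkar convergence theorem then yields $\theta_k \to \theta^\ast$ almost surely to such a fixed point. Finally I would read off the interpretation of the limit: $\phi^\ast(\theta^\ast)$ minimizes the density-estimator loss and hence recovers the (discounted or stationary) state distribution induced by $\pi_{\theta^\ast}$, $\psi^\ast(\theta^\ast)$ is the TD fixed point and hence the correct action-value function for $\pi_{\theta^\ast}$, and $\theta^\ast$ is a local maximizer of the entropy-regularized objective $\tilde J$.

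The step I expect to be the main obstacle is the rigorous justification of the tracking property under the weakened timescale conditions: the hypotheses separate $c_k$ from both $a_k$ and $b_k$ but do not impose $b_k/a_k \to 0$, so $\phi$ and $\psi$ genuinely share a timescale and cannot be analyzed in a nested fashion. Making the frozen-$\theta$ approximation precise therefore hinges on the decoupling observation above and on verifying that the joint fast ODE inherits global asymptotic stability from its two components---that is, that no cross-coupling through the shared randomness or through $\theta$ destroys the attractor. The Lipschitz-in-$\theta$ regularity of $(\phi^\ast,\psi^\ast)$ is exactly what is needed to control this and to keep the slow-scale perturbation asymptotically negligible.
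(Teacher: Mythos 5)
Your proposal is correct and follows essentially the same route as the paper's own (very terse) proof: the paper simply remarks that the result is a straightforward extension of the two-timescale argument of Konda--Tsitsiklis and Borkar, with the single new ingredient being exactly the decoupling you identify --- that $\phi$ and $\psi$ do not depend on each other given $\theta$, so they may share the fast timescale and be treated as one joint fast variable. You have merely filled in the standard ODE-method details (martingale noise, tracking lemmas, Kushner--Clark) that the paper leaves implicit.
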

\begin{proof}
The proof is a straightforward extension of the two time-scale algorithm from~\citep{konda2000actor, borkar2009stochastic}. It is to be noted that in our case the density estimator and the critic need to learn at a faster rate than the actor, but since they do not depend on each other, their learning rates can be chosen independent of each other. The difference from the standard two-timescale algorithm then is just the fact that we have two independent parameters to estimate at the faster time scale.
\end{proof}

\subsection{Algorithm Details}

\subsection{Additional Experiment Results : Toy Tasks}
In this section we outline the results obtained two different toy benchmark domains. We used two envrionments from Open AI gym, namely the CartPole-v0 and Acrobot-v1. In both of these environments, we show the performance where we schedule the regularization coefficient $\lambda$ in different ways. We then outline the performance where entropy is maximized by computing both the approximate state distribution and approximate discounted state distribution. For both the cases we followed two different scheduling for the regularization coefficient $\lambda$. In the first case we kept the regularization coefficient $\lambda$ to be constant fixed values and in the second case we slowly decayed $\lambda$ with the episodes. We then made a comparison of these different scheduling with the baselines where we do not use a regularization at all i.e., $\lambda=0$. Our experiments show that using approximate discounted stationary distribution for entropy maximization along with a steady decay of the regularization parameter $\lambda$ with the number of episodes leads to a significant improvement in performances in these domains. 

\begin{figure}[H]
\begin{center}
    \begin{subfigure}[b]{0.33\textwidth}
    \centering
        \includegraphics[width=\linewidth]{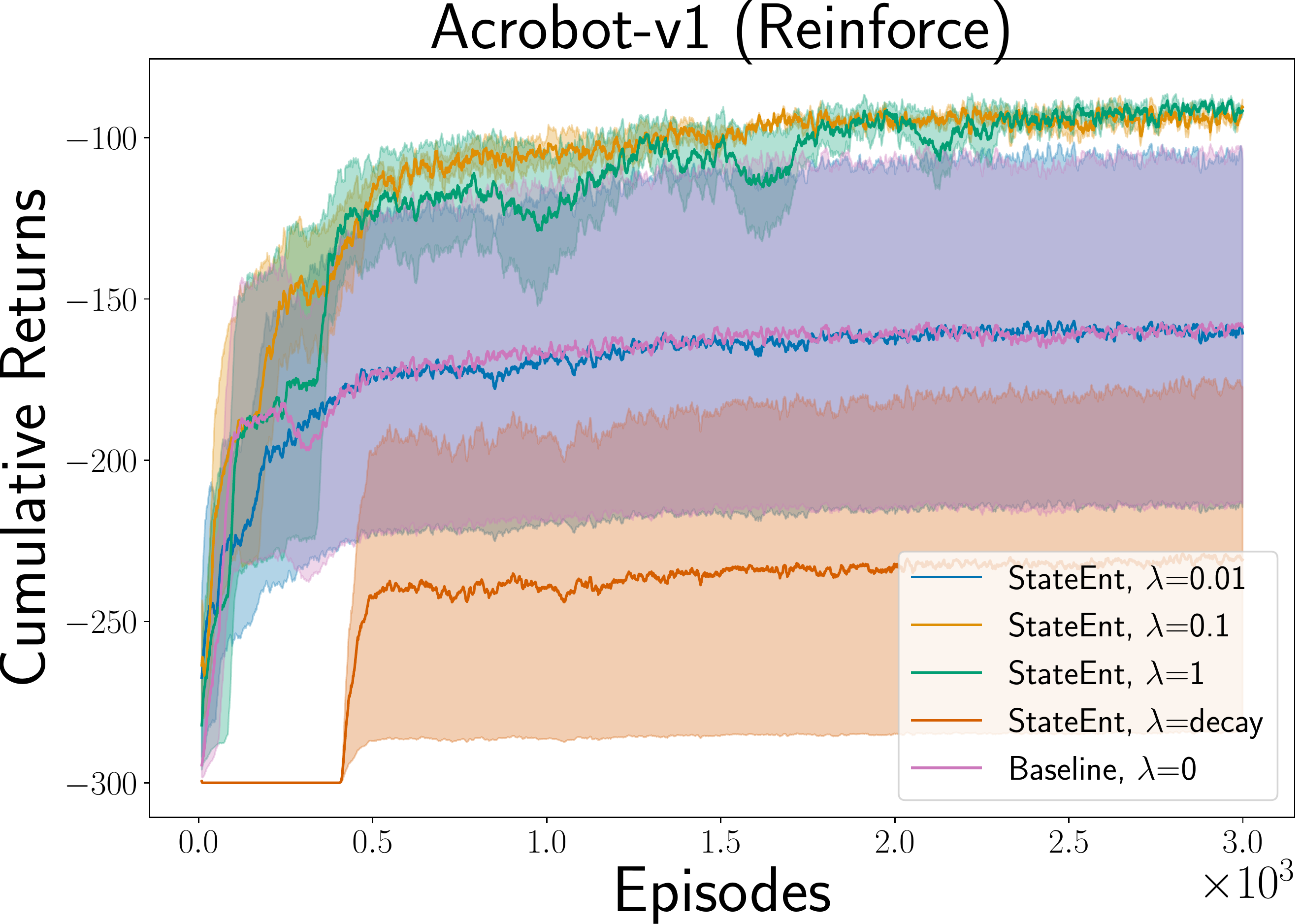}
    \end{subfigure}\hfill
    \begin{subfigure}[b]{0.33\textwidth}
    \centering
        \includegraphics[width=\linewidth]{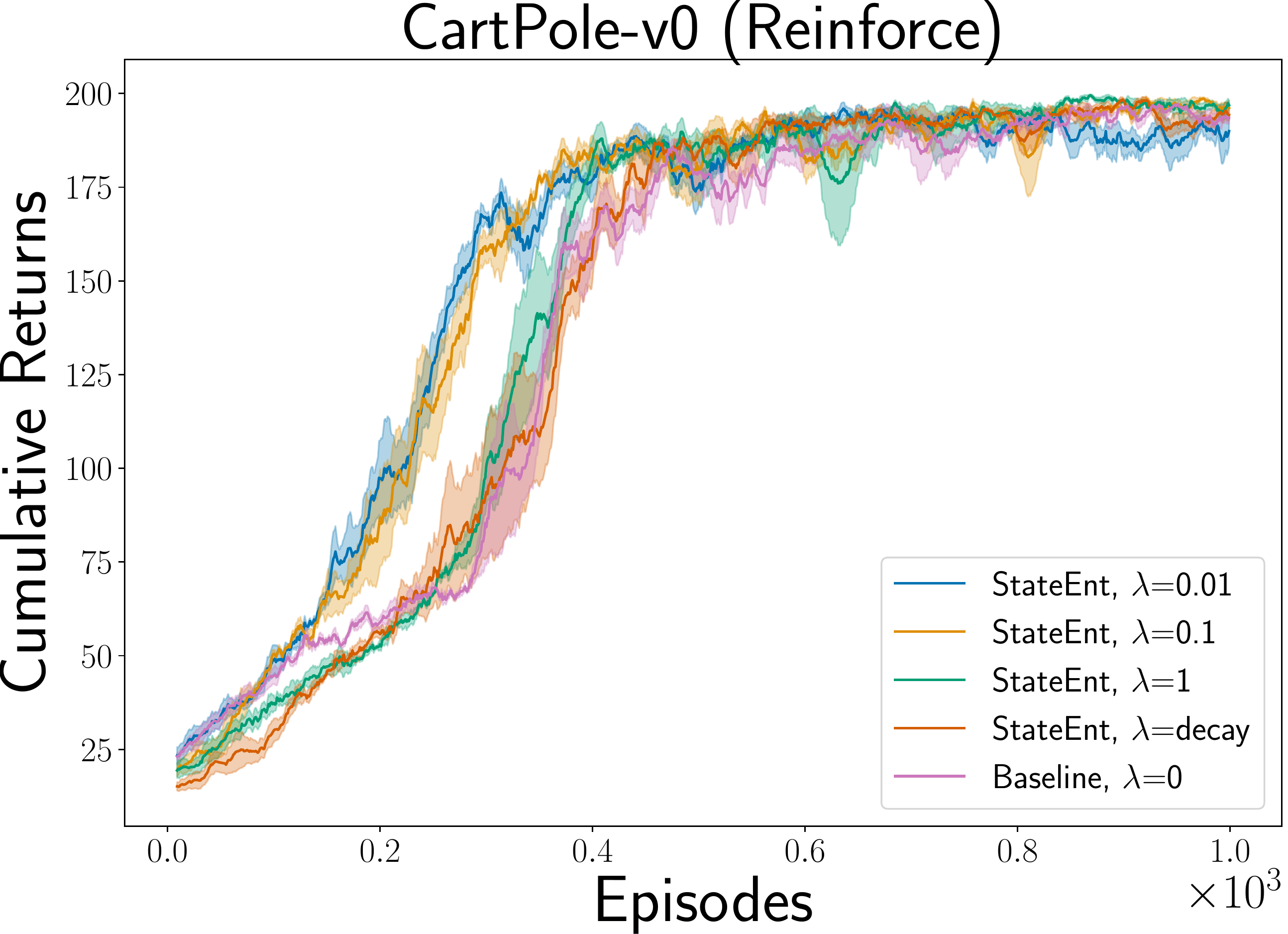}
    \end{subfigure}\hfill
    \begin{subfigure}[b]{0.33\textwidth}
    \centering
        \includegraphics[width=\linewidth]{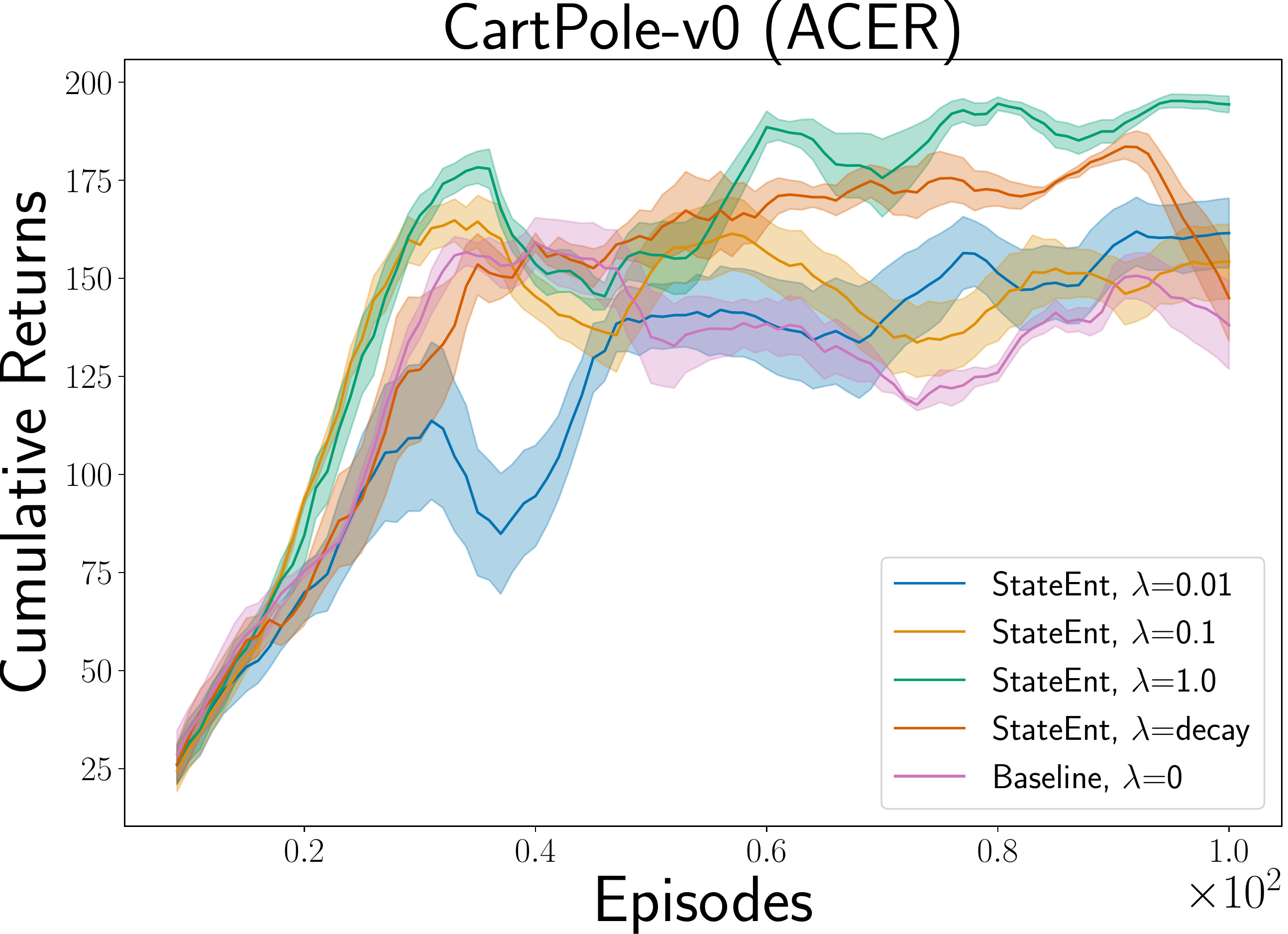}
    \end{subfigure}
    \caption{Performance comparison with difference regularization coefficients. The plots show the value of different regularization coefficients along with the performance obtained when these coefficients are decayed. Experiments where the entropy is computed using \textbf{approximate stationary state distribution} a are demonstrated in two environments.}
    \label{fig:perf_comp_nogamma_cartpole_acrobot}
\end{center}    
\end{figure}

\begin{figure}[h]
\begin{center}
    \begin{subfigure}[b]{0.33\textwidth}
    \centering
        \includegraphics[width=\linewidth]{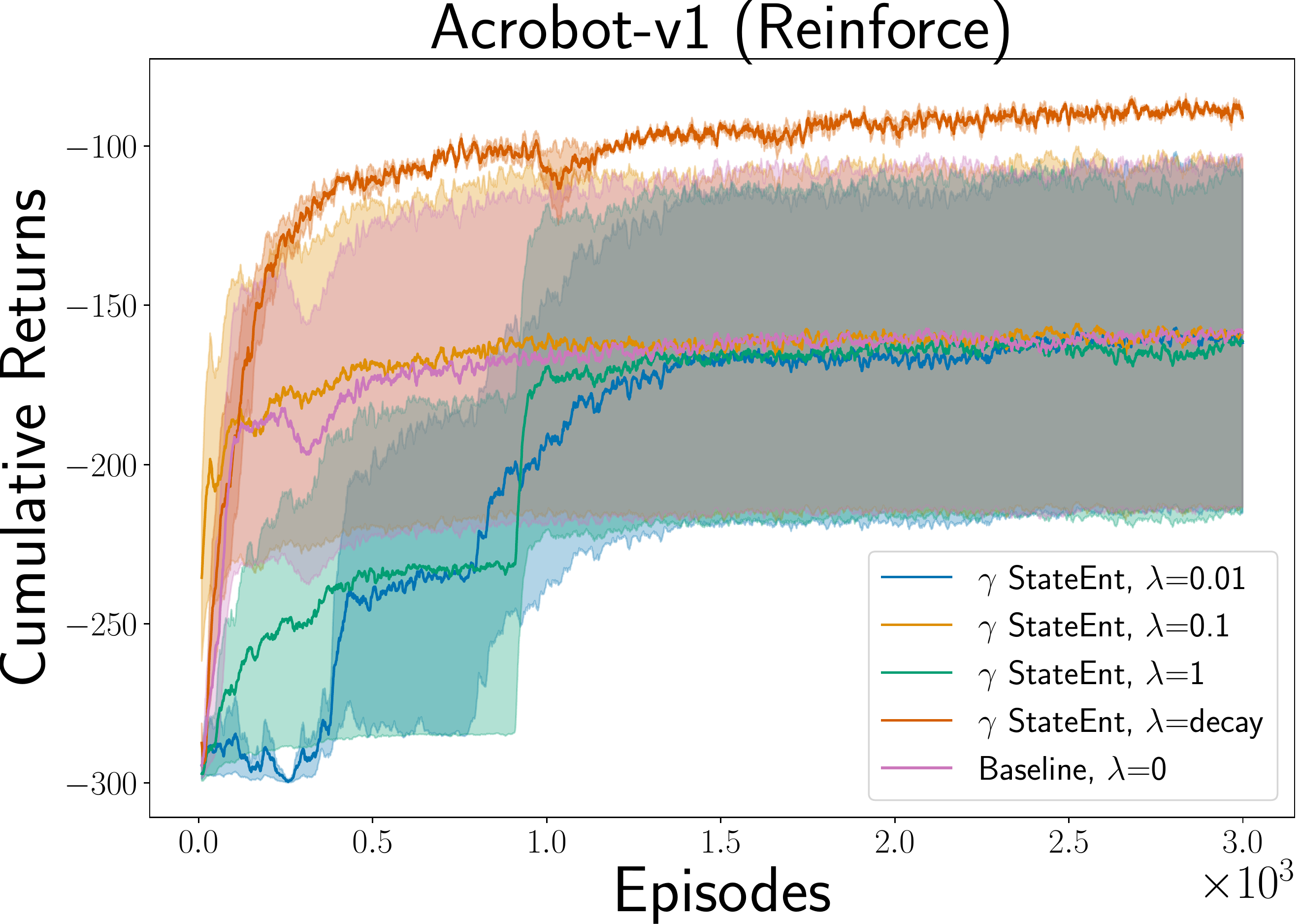}
    \end{subfigure}\hfill
    \begin{subfigure}[b]{0.33\textwidth}
    \centering
        \includegraphics[width=\linewidth]{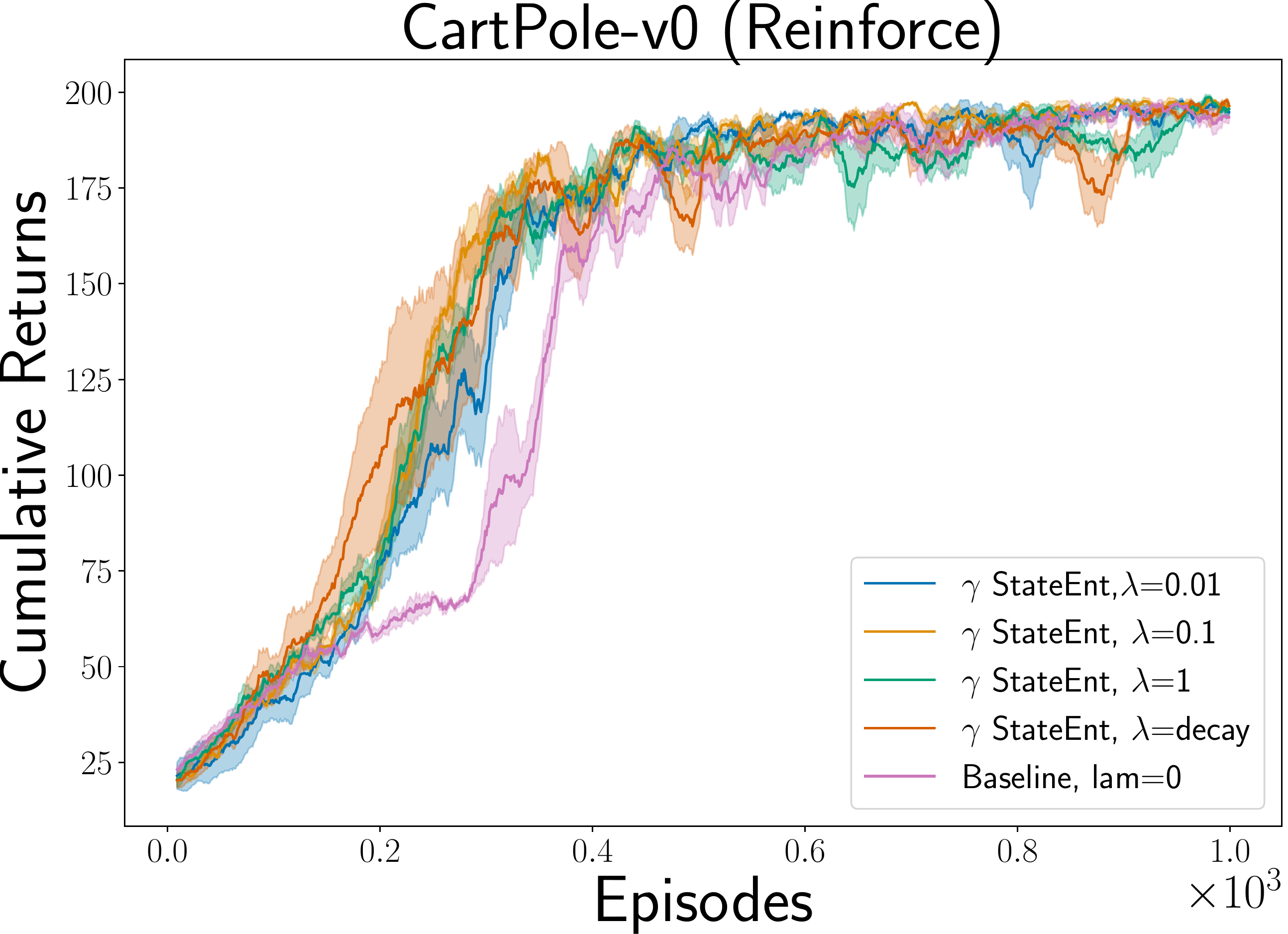}
    \end{subfigure}\hfill
    \begin{subfigure}[b]{0.33\textwidth}
    \centering
        \includegraphics[width=\linewidth]{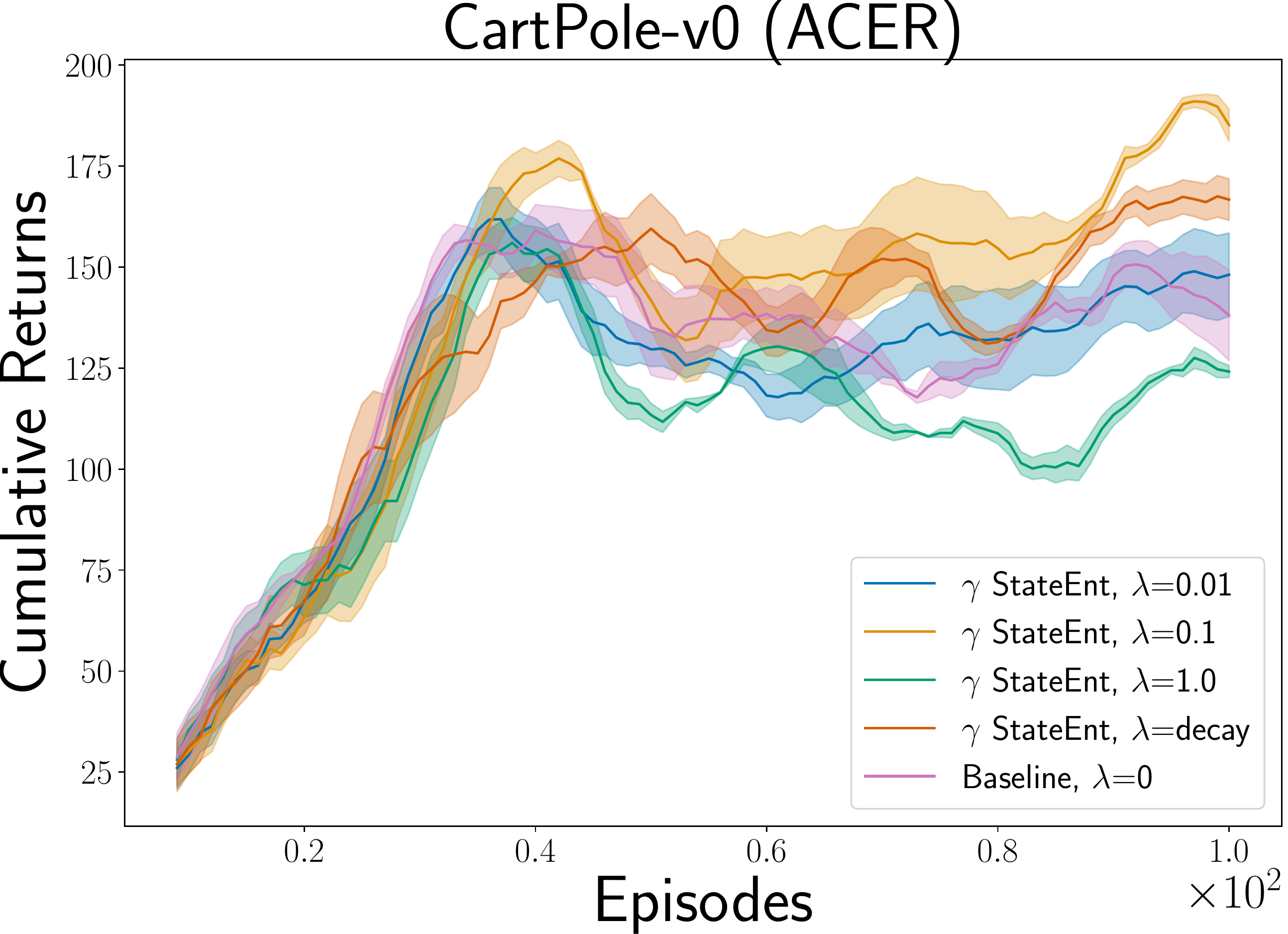}
    \end{subfigure}
    \caption{Performance comparison with difference regularization coefficients. The plots show the value of different regularization coefficients along with the performance obtained when these coefficients are decayed. Experiments where the entropy is computed using \textbf{approximate discounted state distribution} are demonstrated in two environments.}
    \label{fig:perf_comp_gamma_cartpole_acrobot}
\end{center}    
\end{figure}

\section{Experimental Results: Continuous Control Tasks}
We have further included our ablation studies for different continuous control tasks. We have primarily used two popular policy gradient algorithms DDPG and SAC. For the environments, we demonstrated our results in three different mujoco domains which are HalfCheetah-v1, Hopper-v1 and Walker2d-v1. Out plot outlines different performances obtained when the entropy is maximized either by computing the stationary state distribution or the discounted state distribution. We show that maximizing entropy with either the stationary state distribution or the discounted state distribution significantly leads to a better performance in these domains. 
\subsection{Additional Experiment Results : DDPG}

\begin{figure}[H]
\begin{center}
    \begin{subfigure}[b]{0.33\textwidth}
    \centering
        \includegraphics[width=\linewidth]{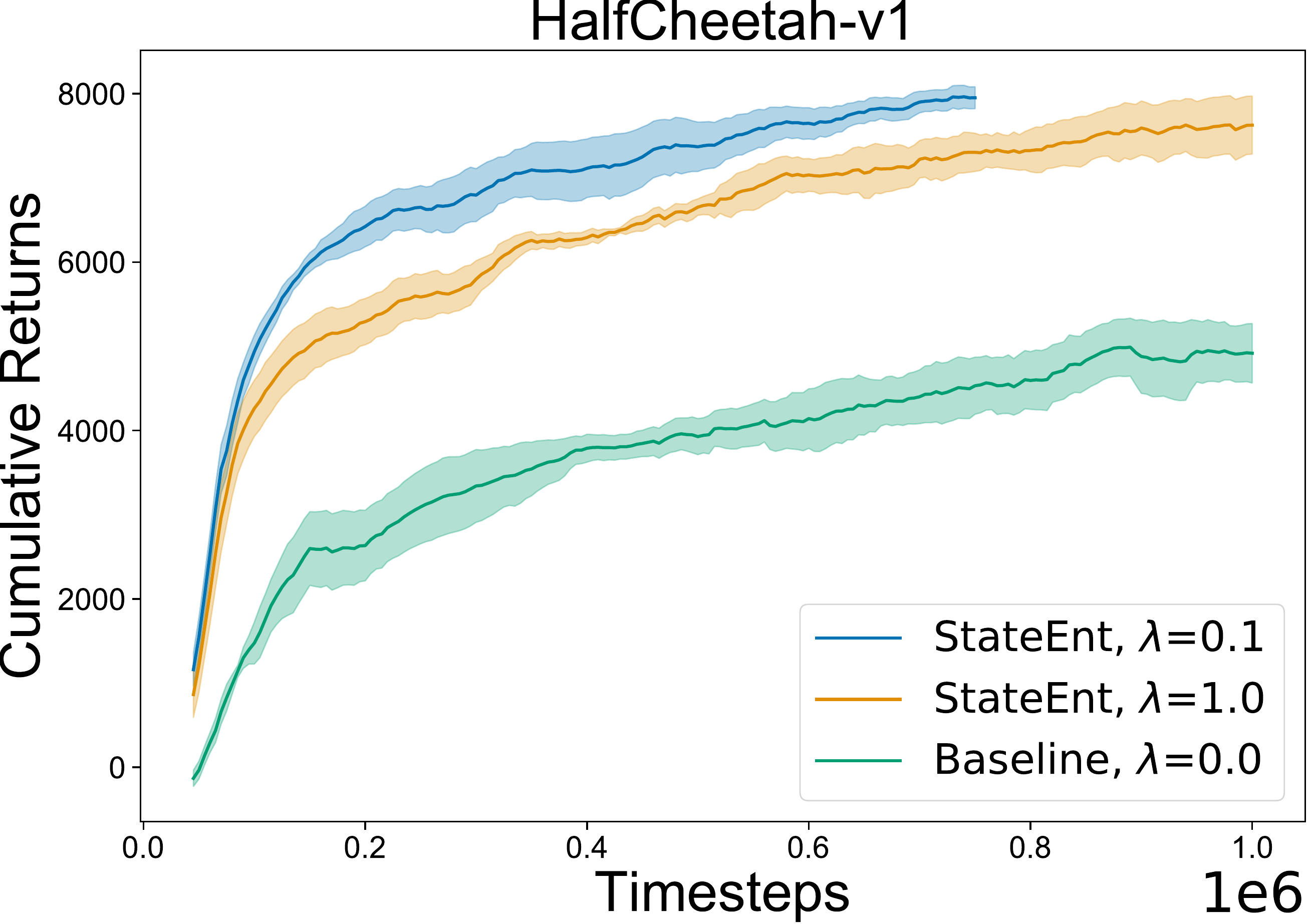}
    \end{subfigure}\hfill
    \begin{subfigure}[b]{0.33\textwidth}
    \centering
        \includegraphics[width=\linewidth]{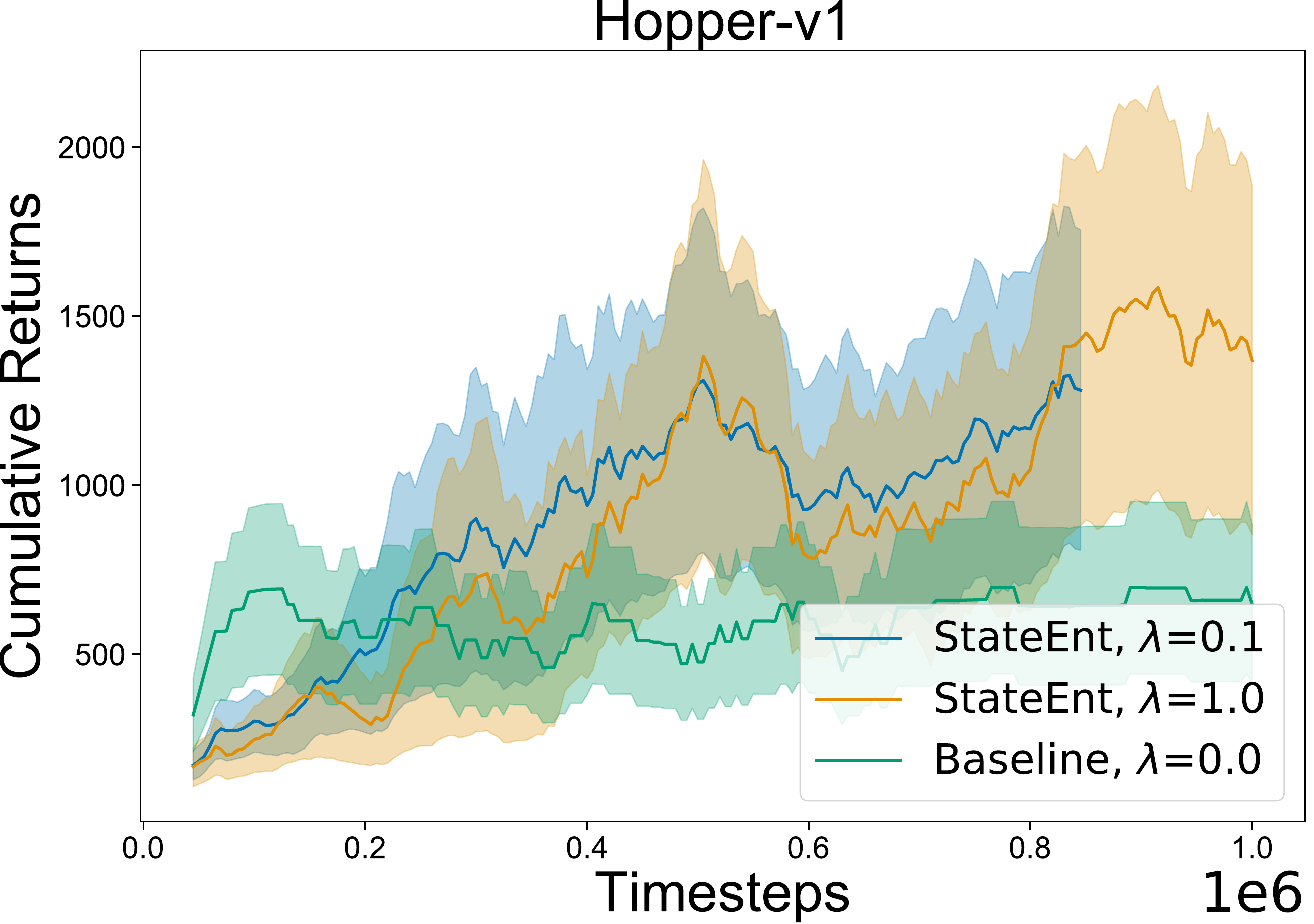}
    \end{subfigure}\hfill
    \begin{subfigure}[b]{0.33\textwidth}
    \centering
        \includegraphics[width=\linewidth]{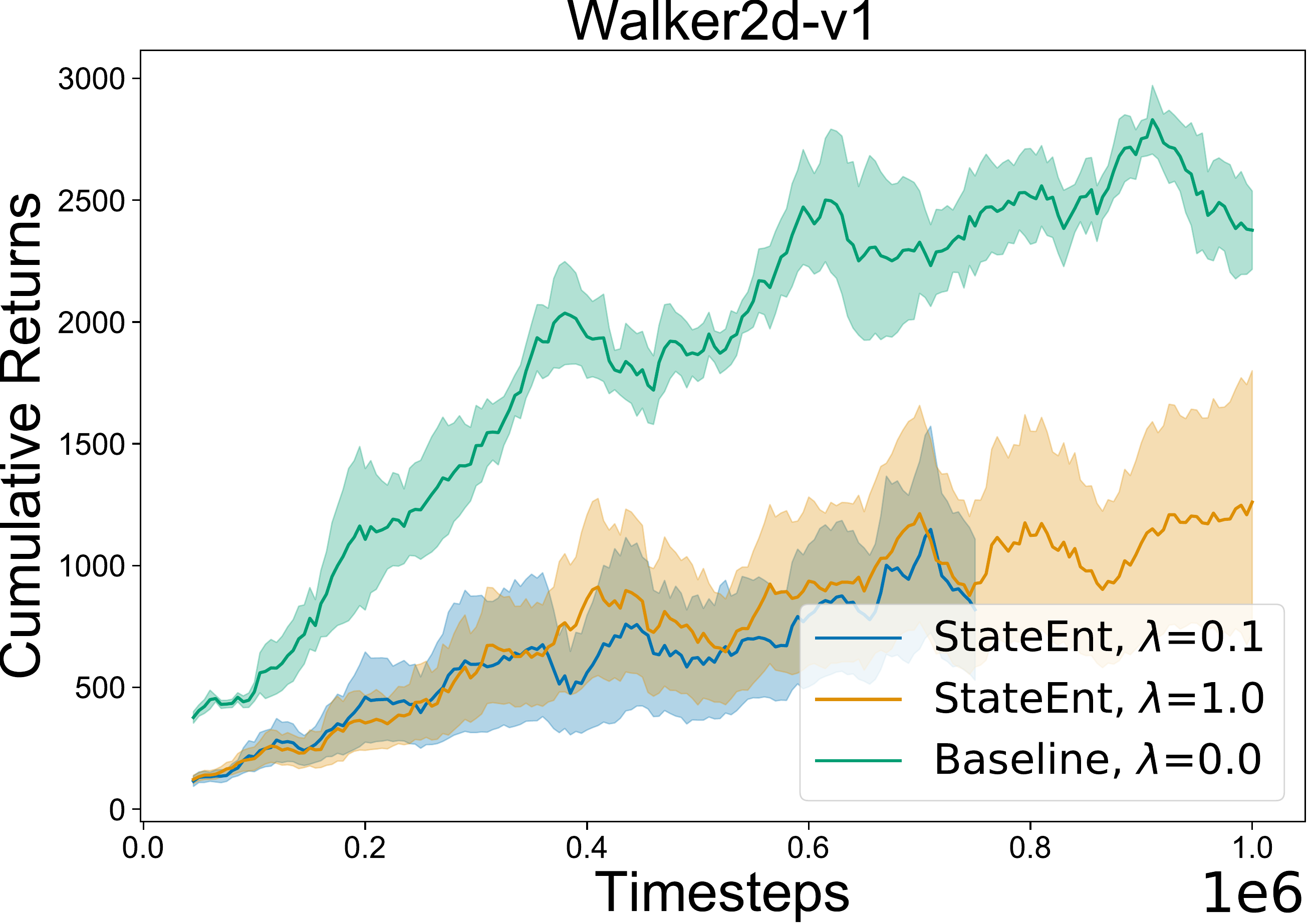}
    \end{subfigure}    
\caption{Additional experimental results with DDPG with StateEnt, for entropy regularization with stationary state distribution. We analyse the significance of using the stationary state distribution over control tasks, with different ranges of $\lambda$ parameters. Note that, for our experiments, we did not do an extensive hyperparameter tuning, but only with the range of $\lambda$ hyperparameters presented in the ablation study here.}
\label{fig:ddpg_ablation_1}
\end{center}    
\end{figure}

\begin{figure}[H]
\begin{center}
    \begin{subfigure}[b]{0.33\textwidth}
    \centering
        \includegraphics[width=\linewidth]{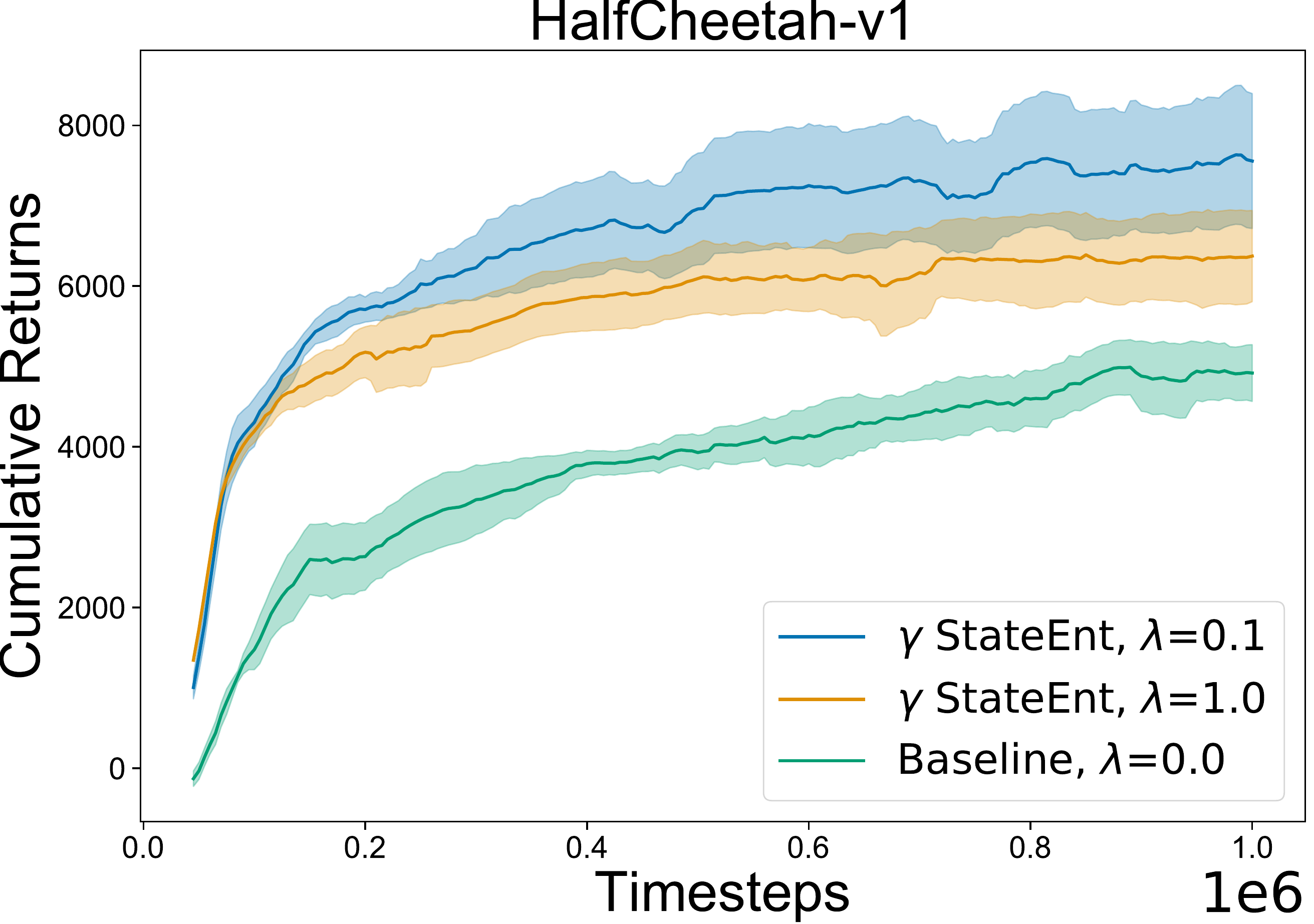}
    \end{subfigure}\hfill
    \begin{subfigure}[b]{0.33\textwidth}
    \centering
        \includegraphics[width=\linewidth]{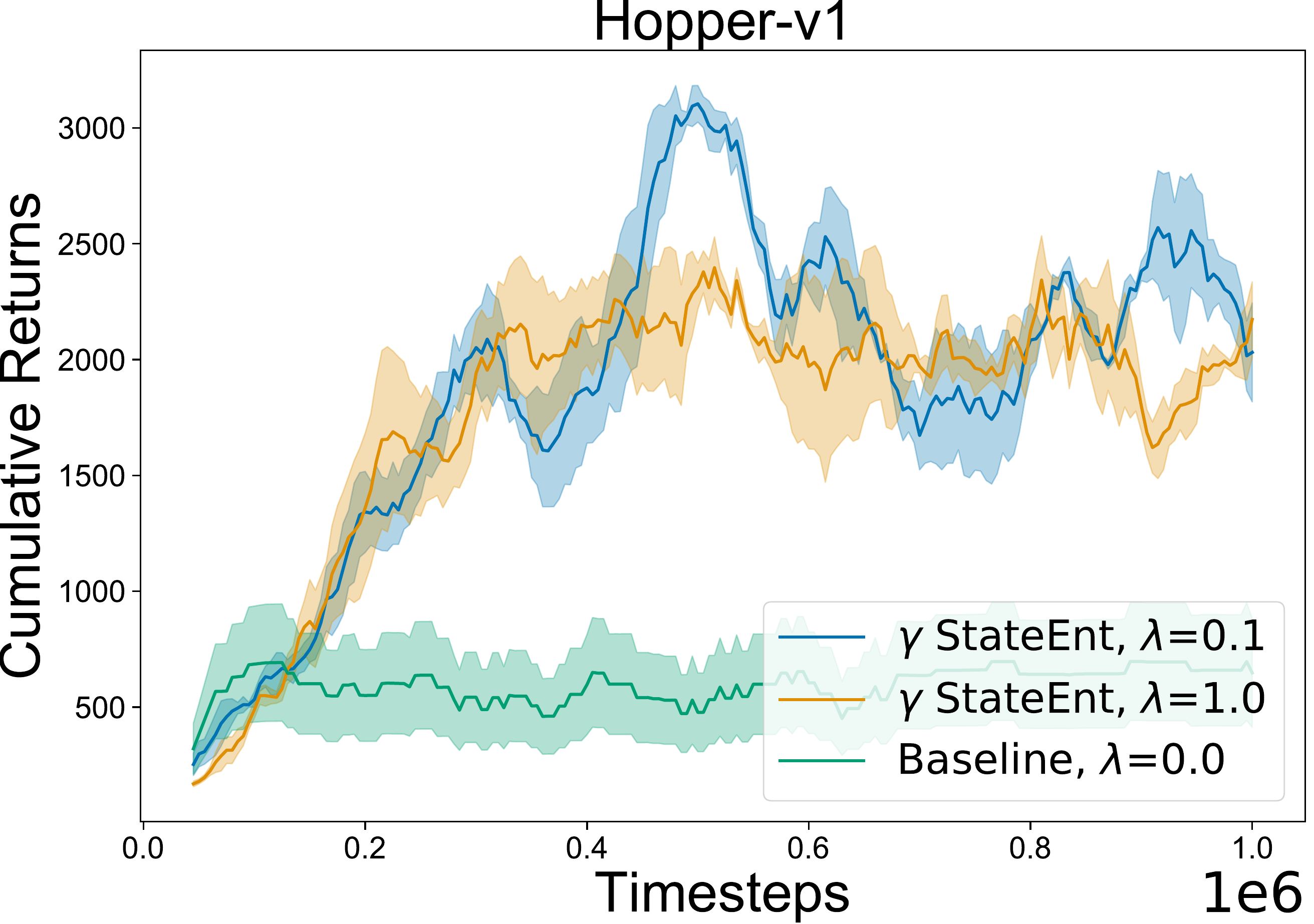}
    \end{subfigure}\hfill
    \begin{subfigure}[b]{0.33\textwidth}
    \centering
        \includegraphics[width=\linewidth]{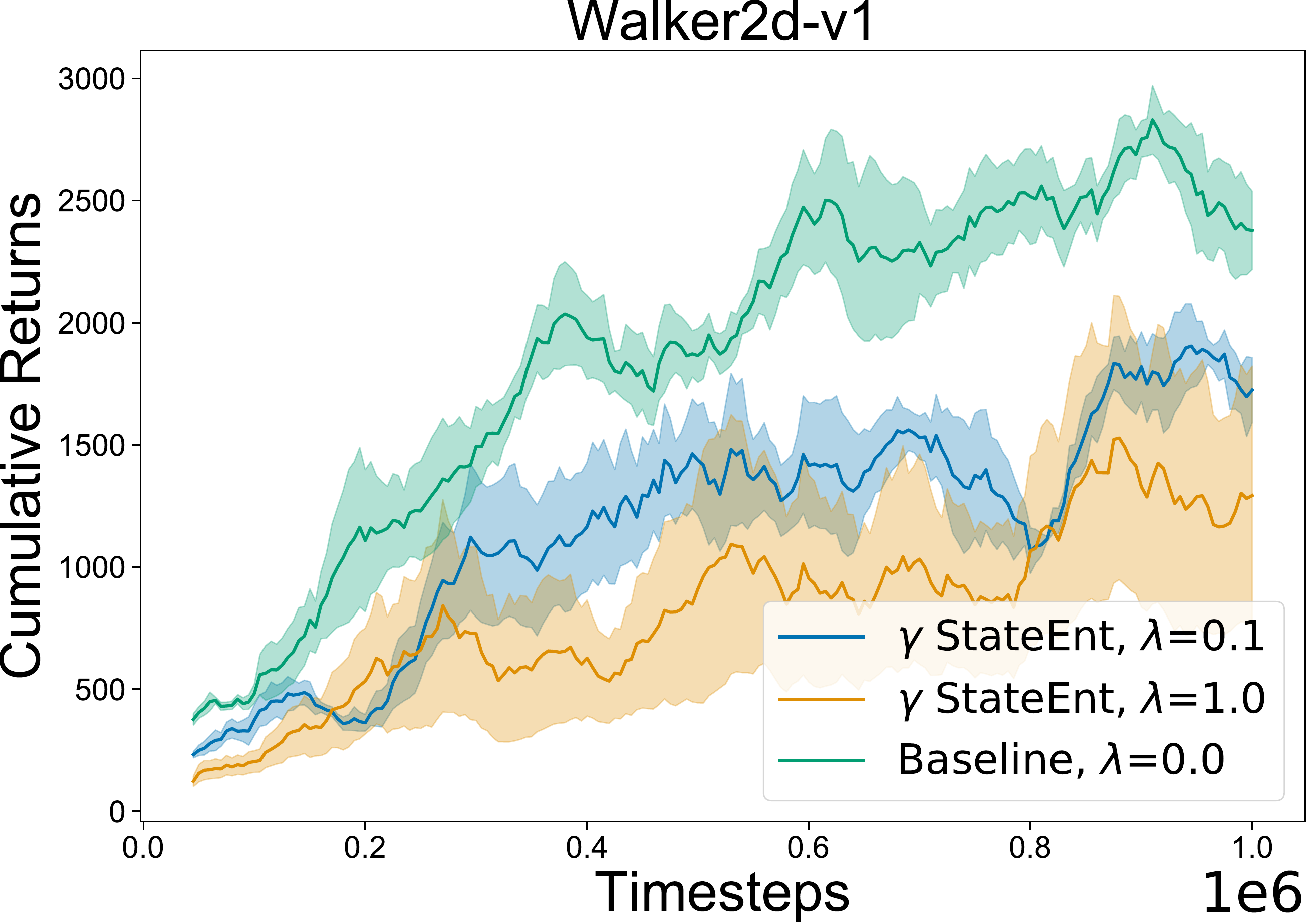}
    \end{subfigure}    
\caption{Additional experimental results with DDPG with $\gamma$ StateEnt, for entropy regularization with discounted state distribution. We analyse the significance of using the discounted state distribution over control tasks, with different ranges of $\lambda$ parameters. Note that, for our experiments, we did not do an extensive hyperparameter tuning, but only with the range of $\lambda$ hyperparameters presented in the ablation study here.}
\label{fig:ddpg_ablation_2}
\end{center}    
\end{figure}

\subsection{Additional Experiment Results : SAC}

\begin{figure}[H]
\begin{center}
    \begin{subfigure}[b]{0.33\textwidth}
    \centering
        \includegraphics[width=\linewidth]{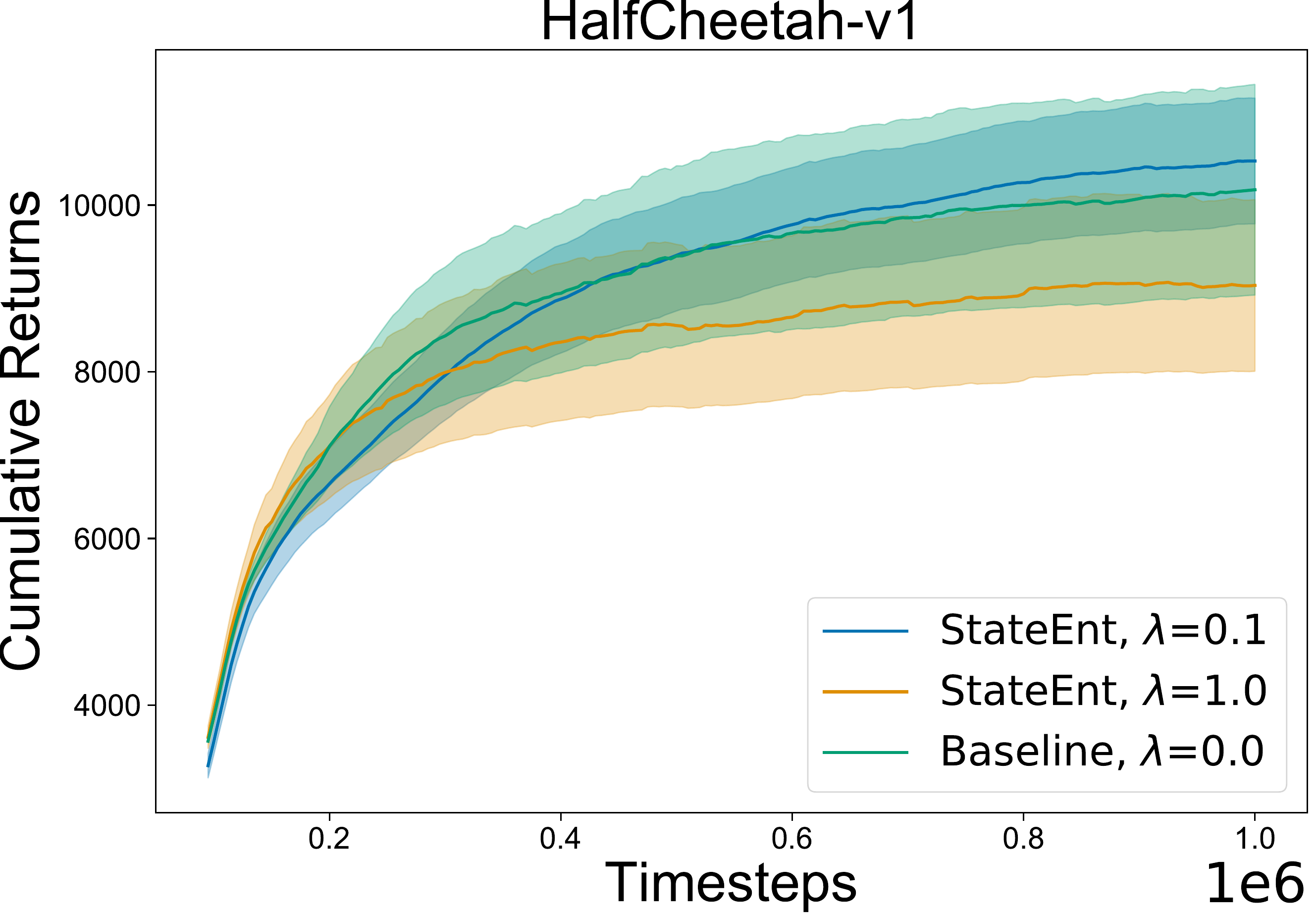}
    \end{subfigure}\hfill
    \begin{subfigure}[b]{0.33\textwidth}
    \centering
        \includegraphics[width=\linewidth]{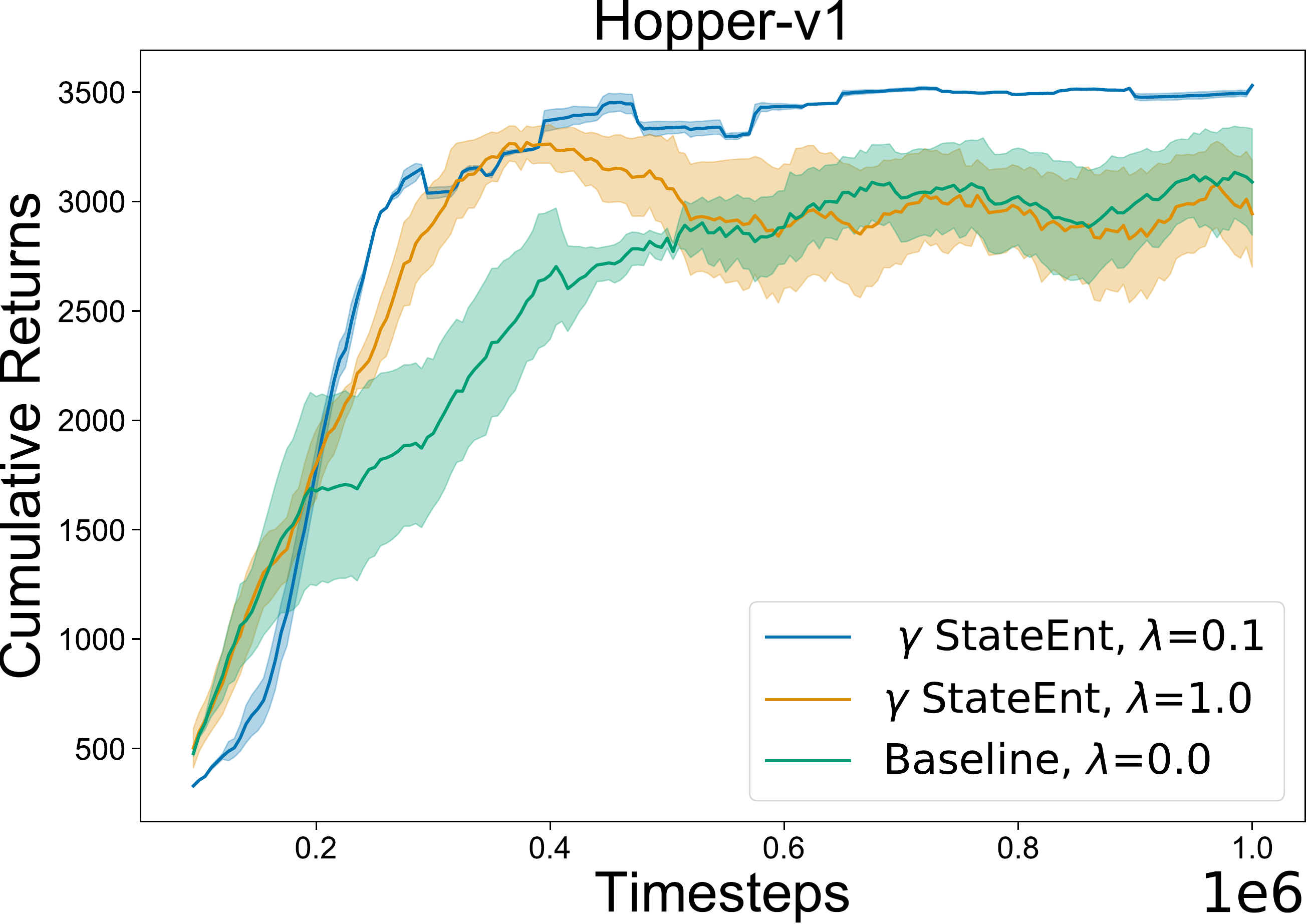}
    \end{subfigure}\hfill
    \begin{subfigure}[b]{0.33\textwidth}
    \centering
        \includegraphics[width=\linewidth]{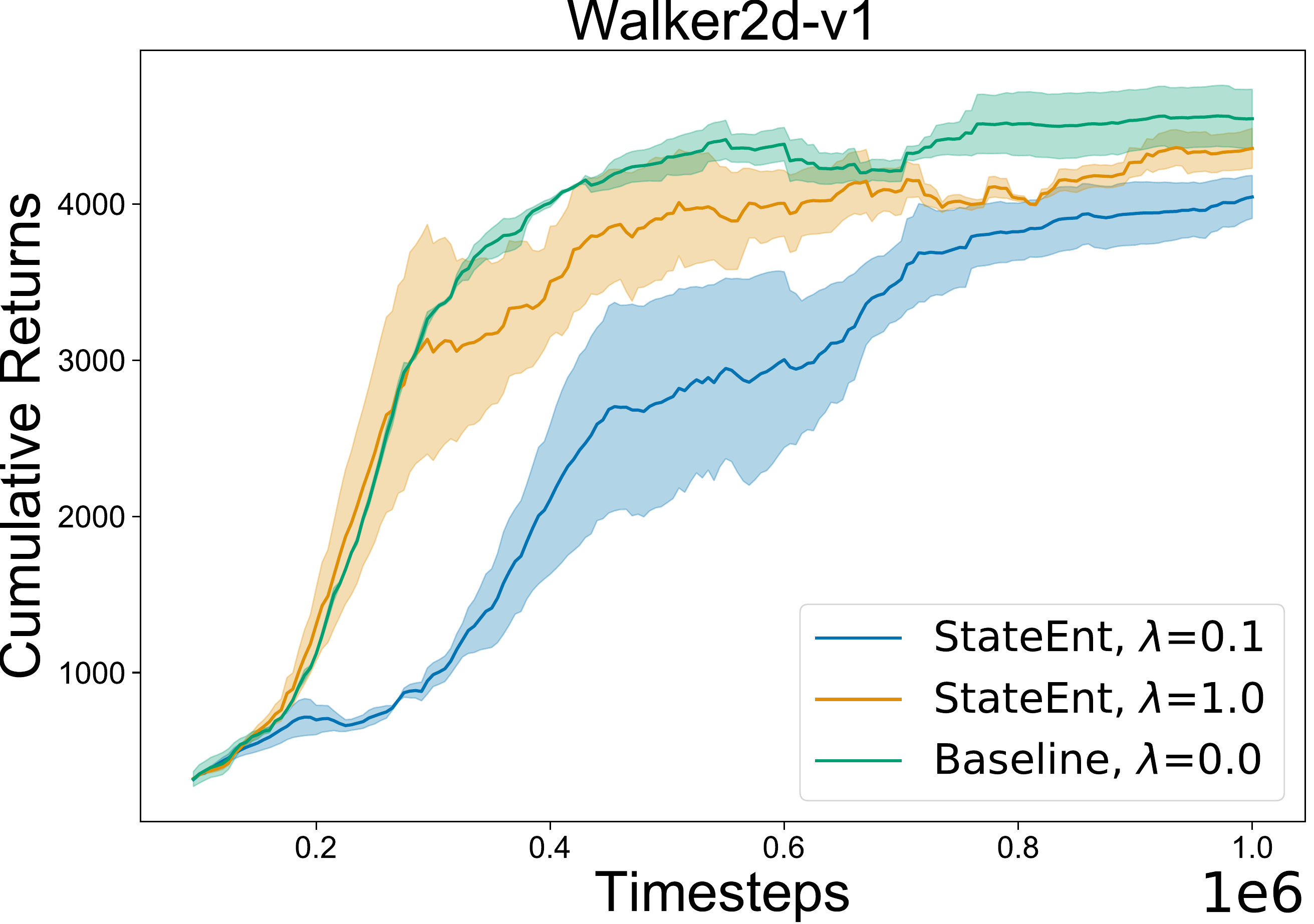}
    \end{subfigure}    
\caption{Additional experimental results with SAC with StateEnt, for entropy regularization with stationary state distribution. We analyse the significance of using the stationary state distribution over control tasks, with different ranges of $\lambda$ parameters. Note that, for our experiments, we did not do an extensive hyperparameter tuning, but only with the range of $\lambda$ hyperparameters presented in the ablation study here.}
\label{fig:sac_ablation_1}
\end{center}    
\end{figure}
\begin{figure}[H]
\begin{center}
    \begin{subfigure}[b]{0.33\textwidth}
    \centering
        \includegraphics[width=\linewidth]{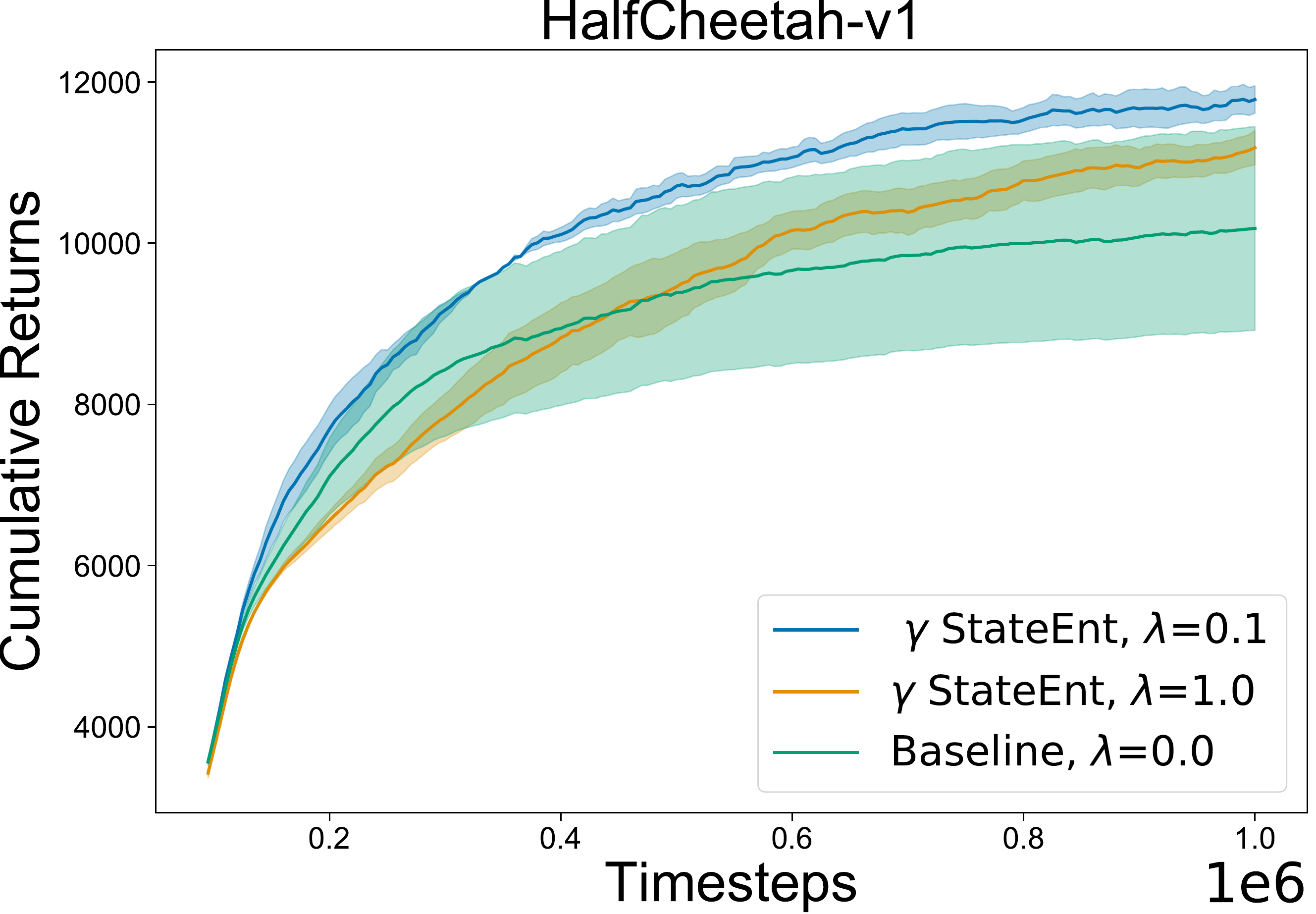}
    \end{subfigure}\hfill
    \begin{subfigure}[b]{0.33\textwidth}
    \centering
        \includegraphics[width=\linewidth]{SAC_with_gamma_Hopper-v1.pdf}
    \end{subfigure}\hfill
    \begin{subfigure}[b]{0.33\textwidth}
    \centering
        \includegraphics[width=\linewidth]{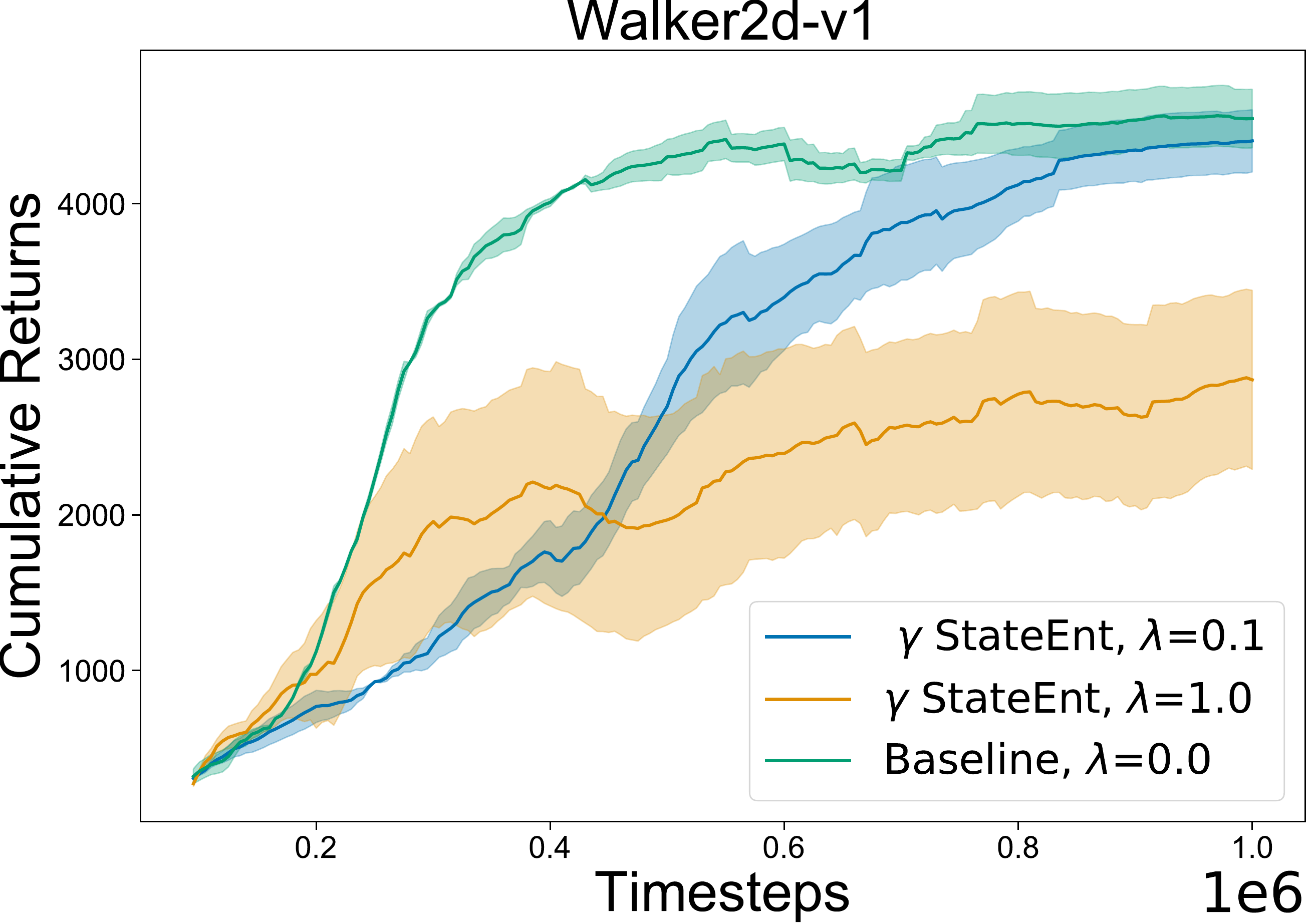}
    \end{subfigure}    
\caption{Additional experimental results with SAC with $\gamma$ StateEnt, for entropy regularization with discounted state distribution. We analyse the significance of using the discounted state distribution over control tasks, with different ranges of $\lambda$ parameters. Note that, for our experiments, we did not do an extensive hyperparameter tuning, but only with the range of $\lambda$ hyperparameters presented in the ablation study here.}
\label{fig:sac_ablation_2}
\end{center}    
\end{figure}

\subsection{Reproducibility Checklist}

We follow the reproducibility checklist from \href{https://www.cs.mcgill.ca/~jpineau/ReproducibilityChecklist.pdf}{Pineau, 2018} and include further details here. For all the models and algorithms, we have included details that we think would be useful for reproducing the results of this work.

\begin{itemize}

\item For all \textbf{models} and \textbf{algorithms} presented, check if you include:

\begin{enumerate}
        
    \item \textit{\textbf{Description of Algorithm and Model}} : 
    We included an algorithm box and provided a description of our algorithm. Our method relies on using a separate density estimation network, (we use a VAE in our work) which takes as input the policy parameters $\theta$ and reconstructs the current state $s_t$. Following this, we train the VAE with the usual variational lower bound (readily available in lot of existing pytorch implementations of VAE). This lower bound acts as the regularizer in our approach, ie, our policy gradient objective directly uses the ELBO as the regulairzer. In the main draft, we provided justification of why this is true. A key step to our implementation (we used pytorch) relies on ensuring that the $.backward()$ update can be used properly, as this the key step for practical implementation of our approach. Please see the code provided along with the draft. 
    
    For the variational auto-encoder based density estimation, we mostly use linear layers with $tanh$ non-linearity in the networks, and use a fixed latent space of size $Z=64$. We use a Gaussian distribution over the latent space $q(Z|S)$ and a unit Gaussian prior for the $KL(q(z|s)||p(z))$. The variational lower bound used as the regularizer is weighted with $\lambda$, where for most of our experiments, we use $\lambda$ in the range of $0.001, 0.01, 0.1$ and $1.0$

    \item \textit{Analysis of Complexity} : We do not include any separate analysis of the complexity of our algorithm. Our method can be used on top of any existing RL algorithms (policy gradient or actor-critic mehtods), where the only extra computation we need is to estimate the variational lower bound for the VAE. This VAE is, however, trained with the same set of sampled states, ie, we do not require separate rollouts and samples for training the density estimator. Therefore, we introduce extra computation in our approach, but the sample complexity remains the same. For comparison with baseline, we use $\lambda=0.0$ for a fair comparison with the baseline.

    \item \textit{Link to downloadable source code} : We provide code for our work in a separate file, for all the experiments used in this work. Furthermore, we provide details of experimental setup below, to ensure our experiments can be exactly reproduced, and additional experimental results and ablation studies are provided in the appendix. 
    
\end{enumerate}

\item For any \textbf{theoretical claim}, check if you include:

\begin{enumerate}
    \item Our key theoretical contribution is based on estimating the state distributions (discounted and stationary) by training a state density estimator. In the main draft, we clearly explain the connections between the variational lower bound objective, required to train the density estimator, and the connection it has to our state entropy regularized objective. 
    
    \item \textit{Complete Proof of Claim :} In appendix, we have also included a clear derivation of our proposed approach, using existing theorem used in the literature, to clarify how exactly our proposed approach differs. We further provide proof for a three-time-scale algorithm, which provides a key justification to our work. 
    
    \item \textit{A clear explanation of any assumptions} : 
    Our key assumption is that, we assume that the stochastic process is fast mixing, and due to erodiciity under all considered policies, the stationary state distribution can be estimated efficiently. We further assume that the varational lower bound used for our objective is tight, and closely approximates the log-likelihood of the visited states $\log p(s)$

\end{enumerate}

\item For all figures and tables that present \textbf{empirical results}, check if you include:

\begin{enumerate}
    \item \textit{Data collection process} : We use our approach on top of any existing policy gradient based approach, and therefore follow the same standard policy rollout based data collection process. Our proposed approach do not require any extra sample complexity, as all the models are trained with the same set or batch of data.

    \item \textit{Downloadable version of environment} : We use open-sourced environment implementations from OpenAI gym in most of the tasks, and the Mujoco control simulator readily available online. We provide code for all our experiments, including code for any additional experiments that were used for justifying the hypothesis of our work. Often these environments come with open-sourced tuned deep RL algorithms, and we used existing open-sourced implementations (links available below) to build our approach on top of existing algorithms.

    \item \textit{Description of any pre-processing step} : We do not require any data pre-processing step for our experiments. 
    
    \item \textit{Sample allocation for training and evaluation} : We use standard RL evaluation framework for our experimental results. In our experiments, as done in any RL algorithm, the trained policy is evaluated at fixed intervals, and the performance is measured by plotting the cumulative returns. In most of our presented experimental results, we plot the cumulative return performance measure. All our experiments for the simple tasks are averaged over $10$ random seeds, and over $10$ random seeds for the deepRL control tasks.

    \item \textit{Range of hyper-parameters considered : }  For our experiments, we did not do any extensive hyperparameter tuning. We took existing implementations of RL algorithms (details of which are given in the Appendix experimental details section below), which generally contain tuned implementations. For our proposed method, we only introduced the extra hyperparameter $\lambda$ state distribution entropy regularizer. We tried our experiments with only 3 different lambda values ($\lambda = 0.001, 0.01 and 0.1$) and compared to the baseline with $\lambda=0.0$ for a fair comparison. Both our proposed method and the baseline contains the same network architectures, and other hyperparameters, that are used in existing open-sourced RL algorithms. We include more details of our experiment setups in the next section in Appendix.
    
    \item \textit{Number of Experiment Runs} : For all our experimental results, we plot results over $10$ random see.Eh of our hyper-parameter tuning is also done with $10$ experiment runs with each hyperparameter.These random seeds are sampled at the start of any experiment, and plots are shown averaged over 10 runs. We note that since a lot of DeepRL algorithms suffer from high variance, we therefore have the high variance region in some of our experiment results.
    
    \item \textit{Statistics used to report results} : In the resulting figures, we plot the mean, $\mu$, and standard error $\frac{\sigma}{\sqrt(N)})$ for the shaded region, to demonstrate the variance across runs and around the mean. We note that some of the environments we used in our experiments, are very challenging to solve (e.g 3D maze navigation domains), resulting in the high variance (shaded region) around the plots. The Mujoco control experiments done in this work have the standard shaded region as expected in the performance in the baseline algorithms we have used (DDPG and SAC).
    
    \item \textit{Error bars} : The error bars or shaded region are due to $std / sqrt(N)$ where $N=10$ for the number of experiment runs.
    
    \item \textit{Computing Infrastrucutre} : We used both CPUs and GPUs in all of our experiments, depending on the complexity of the tasks. For some of our experiments, we could have run for more than $10$ random seeds, for each hyperparameter tuning, but it becomes computationally challenging and a waste of resources, for which we limit the number of experiment runs, with both CPU and GPU to be a standrd of $10$ across all setups. 
\end{enumerate}

\end{itemize}

\subsection{Additional Experimental Details}
\label{sec:ref:detailed_experimental_setup}
In this section, we include further experimental details and setup for the results presented in the paper

\textbf{Experiment setup for State Space Coverage}

The sparse reward gridworld environments are implemented in the open-source package EasyMDP. For this task, we use a parallel threaded \textsc{Reinforce} implementation, and only compare the performance of our proposed approach qualitatively by plotting the state visitation heatmaps.

    


\textbf{Experiment setup in Continuous Control Tasks}

For the continuous control experiments, we used the open-source implementation of DDPG available from the accompanying paper \citep{td3}. We further use a SAC implementation, from a modified implementation of DDPG. Both the implementations of DDPG and SAC are provided with the accompanying codebase. We used the same architectures and hyperparameters for DDPG and SAC as reported in \citep{td3}.

\end{document}